\newtheorem{theorem}{Theorem}[section]
\newtheorem{lemma}[theorem]{Lemma}
\newtheorem{prop}{Proposition}
\algnewcommand{\LineComment}[1]{\State \(\triangleright\) #1}
\newcommand{\TWOC}[2]{\left(\setlength{\arraycolsep}{1pt}\begin{array}{c}#1 \\ #2\end{array}\right)}
\newcommand{\TWORCell}[2]{\begin{tabular}{@{}l@{}}#1 \\ #2\end{tabular}}
\DeclareMathOperator*{\argmin}{\arg\!\min}
\DeclareMathOperator*{\argmax}{\arg\!\max}
\title{\LARGE  \bf Learning-based Feedback Controller for Deformable Object Manipulation} 
\author{Biao Jia*, Zhe Hu*, Zherong Pan, Dinesh Manocha, Jia Pan \\%
\url{https://sites.google.com/view/lfcdom/}
\thanks{Biao Jia, Zherong Pan, and Dinesh Manocha are with the Department of Computer Science, The University of North Carolina, Zhe Hu and Jia Pan are with the Department of Mechanical and Biomedical Engineering, City University of Hong Kong. }%
}
\begin{document}
\maketitle

\begin{abstract}
In this paper, we present a general learning-based framework to automatically visual-servo control the position and shape of a deformable object with unknown deformation parameters.
The servo-control is accomplished by learning a feedback controller that determines the robotic end-effector's movement according to the deformable object's current status. 
This status encodes the object's deformation behavior by using a set of observed visual features, which are either manually designed or automatically extracted from the robot's sensor stream.
A feedback control policy is then optimized to push the object toward a desired featured status efficiently. The feedback policy can be learned either online or offline. Our online policy learning is based on the Gaussian Process Regression (GPR), which can achieve fast and accurate manipulation and is robust to small perturbations. An offline imitation learning framework is also proposed to achieve a control policy that is robust to large perturbations in the human-robot interaction.
We validate the performance of our controller on a set of deformable object manipulation tasks and demonstrate that our method can achieve effective and accurate servo-control for general deformable objects with a wide variety of goal settings. 
\end{abstract}

\section{Introduction}

Robot manipulation has been extensively studied for decades and there is a large body of work on the manipulation of rigid and deformable objects. Compared to the manipulation of a rigid object, the state of which can be completely described by a six-dimensional configuration space, deformable object manipulation (DOM) is more challenging due to its very high configuration space dimensionality. The resulting manipulation algorithm needs to handle this dimensional complexity and maintain the tension to perform the task. DOM has many important applications, including cloth folding~\cite{Miller:2011:AGA, Towner:2011:BCI};
robot-assisted dressing or household chores~\cite{Kapusta:2016:DDH, Gao:2016:IPO}; ironing~\cite{Li:2016:MSS}; coat checking~\cite{Twardon:2015:ISC}; sewing~\cite{Schrimpf:2012:RSI};
string insertion~\cite{Wang:2015:AOM}; robot-assisted surgery and suturing~\cite{Patil:2010:TAT,Schulman:2013:GIR}; and transporting large materials like cloth, leather, and composite materials~\cite{Kruse:2015:CHR}.

\begin{figure}[t] 
\centering
\includegraphics[width=0.9\linewidth]{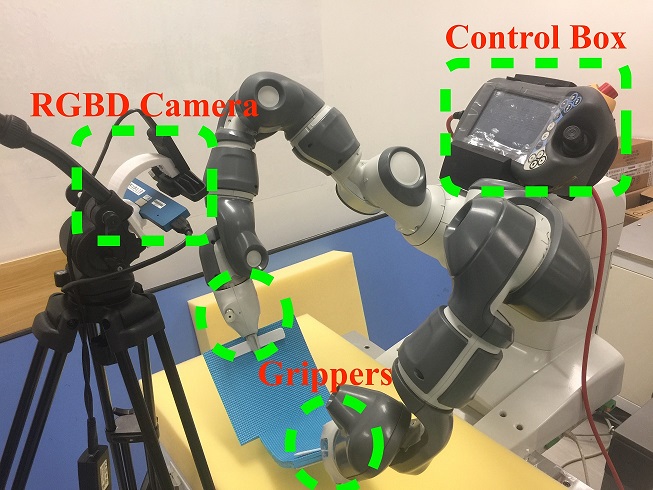}
\caption{Our robotic system for deformable object manipulation is made of two 3D cameras and one dual-arm ABB robot. }
\label{fig:manipulation-system}
\end{figure}

There are two main challenges that arise in performing DOM tasks. 
First, we need to model a feature representation of the object status that can account for the object's possibly large deformations. The design of such a feature is usually task-dependent and object-specific. For instance, a cloth needs richer features than a rope, and a surgical-related application requires more sophisticated features than a consumer application. As a result, feature design is usually a tedious manual procedure. Second, we need to develop a controller that maps the object's state features to a robot control action in order to achieve a specific task. The design of such a controller is usually also task-dependent. For instance, a manipulation controller for human dressing will require a more expressive controller parametrization than that needed to flatten a piece of paper. As a result, controller design usually requires tedious manual parameter tuning when generalizing to different tasks. More importantly, the two problems of feature extraction and controller design were solved as two separate problems. With the recent development of deep (reinforcement) learning, a prominent method \cite{Levine:2016:ETD,mnih:2015:HLC} is to represent feature extraction and controller parametrization as two neural networks, which are either trained either jointly or separately. However, the design decisions for these two networks, e.g., their network structures, are made independently. This method suffers from the large number of parameters that need to be manually determined. There are some more task-specific methods, such as \cite{Li:2015:FDO,Li:2016:MSS}, which use a vision-based feature extractor, but the controller is a standalone optimization-based motion planner whose formulation is independent of feature extraction. 
These methods are hard to extend because the feature-based objective functions for trajectory optimization are task-specific. 

There is a rich literature on autonomous robotic DOM finding solutions to the above challenges, and these works can be classified into three categories. The first group of approaches requires a physical model of the object's deformation properties in terms of stiffness, Young's modules, or FEM coefficients, to design a control policy~\cite{McConachie:2016:BBM, Patil:2010:TAT, Bodenhagen:2014:AAR,Alarcon:2013:MFV,Alarcon:2016:ADM,Langsfeld:2016:OLP}. However, such deformation parameters are difficult to estimate accurately and may even change during the manipulation process, especially for objects made by nonlinear elastic or plastic materials. The approaches in the second group use manually designed low-dimensional features to model the object's deformation behavior~\cite{Sullivan:1996:UAD,Alarcon:2016:ADM,Alarcon:2013:MFV,Alarcon:2018:MFV} and then use an adaptive linear controller to accomplish the task. The approaches in the final group do not explicitly model the deformation parameters of the object. Instead, they use vision or learning methods to accomplish tasks directly~\cite{Miller:2011:AGA,Schulman:2013:GIR,Hadfield:2015:BLW,Tang:2016:RMD,Yang:2017:RFT,Li:2015:FDO,Doumanoglou:2016:FCA,Tanaka:2018:EMDNet}. These methods focus on high-level policies but cannot achieve accurate operations; their success rate is low and some of them are open-loop methods. As a result, designing appropriate features and controllers to achieve accurate and flexible deformable object manipulation is still an open research problem in robotics~\cite{Henrich:2012:RMO}.

In this paper, we focus on designing a general learning-based feedback control framework for accurate and flexible DOM. The feedback controller's input is a feature representation for the deformable object's current status and its output is the robotic end-effector's movement. Our framework provides solutions to both the feature design and controller parameterization challenges. For feature design, we propose both a set of manually designed low-level features and a novel higher-level feature based on a histogram of oriented wrinkles, which is automatically extracted from data to describe the shape variation of a highly deformable object. For controller parameterization, we first propose a novel nonlinear feedback controller based on Gaussian Process Regression (GPR), which learns the object's deformation behavior online and can accomplish a set of challenging DOM tasks accurately and reliably. We further design a controller that treats feature extraction and controller design as a coupled problem by using a random forest trained using two-stage learning. During the first stage, we construct a random forest to classify a sampled dataset of images of deformable objects. Based on the given forest topology, we augment the random forest by defining one optimal control action for each leaf-node, which provides the action prediction for any unseen image that falls in that leaf-node. In this way, the feature extraction helps determine the parameterization of the controller. The random forest construction and controller optimization are finally integrated into an imitation learning framework to improve the robustness of human-robot co-manipulation tasks.

We have integrated our approach with an ABB YuMi dual-arm robot and a camera for image capture and use this system to manipulate different cloth materials for different tasks. We highlight the real-time performance of our method on a set of DOM benchmarks, including standard feature point reaching tasks (as in~\cite{Alarcon:2013:MFV,Alarcon:2016:ADM}); the tasks for cloth stretching, folding, twisting, and placement; and the industrial tasks for cloth assembly.  Our manipulation system successfully and efficiently accomplishes these manipulation tasks for a wide variety of objects with different deformation properties. 

Our main contributions in this paper include:
\begin{itemize}
\item A general feedback control framework for DOM tasks.
\item A set of manually designed low-level features that work well in a set of challenging DOM tasks.
\item A novel histogram feature representation of highly deformable materials (HOW-features) that are computed directly from the streaming RGB data using Gabor filters. These features are then correlated using a sparse representation framework with a visual feedback dictionary and then fed to the feedback controller to generate appropriate robot actions.
\item An online Gaussian Process Regression based nonlinear feedback controller for DOM tasks that is robust and adaptive to the object's unknown deformation parameters.
\item A random-forest-based DOM-controller parametrization that is robust and less parameter-sensitive; an imitation learning algorithm based framework that trains robust DOM controllers using a deformable object simulator.
\item A set of benchmark DOM tasks that have importance for manufacturing or service industries.
\end{itemize}

The techniques developed in this paper would be useful for general DOM tasks, including the DOM tasks in the warehouse. In addition, our techniques could potentially be used in some manufacturing processes. Let's take the assembly of cloth pieces with fixtures, one typical deformable object manipulation task that we will study in this paper, as an example. In this task, a piece of cloth with holes needs to be aligned with a fixture made by a set of vertical locating pins. The assembled cloth pieces are then sent to the sewing machine for sewing operations. Such a task can be efficiently performed by a human worker without any training, as shown in Figure~\ref{fig:worker}, but it is difficult for a robot. For instance, the wrinkles generated during the operation will interfere the feature extraction and tracking procedures that are critical for perception feedbacks. And the highly deformable property of the cloth will lead to unpredictable changes in the size and shape of the holes during the manipulation. These challenges make it difficult to achieve an accurate and reliable robotic manipulation control for cloth assembly. The methods developed in this paper would help to enable the robot to accomplish the cloth assembly tasks accurately and efficiently. 

\begin{figure}[t] 
\centering
\begin{subfigure}{0.15\textwidth}
\includegraphics[width=1.0\linewidth, height=3.5cm]{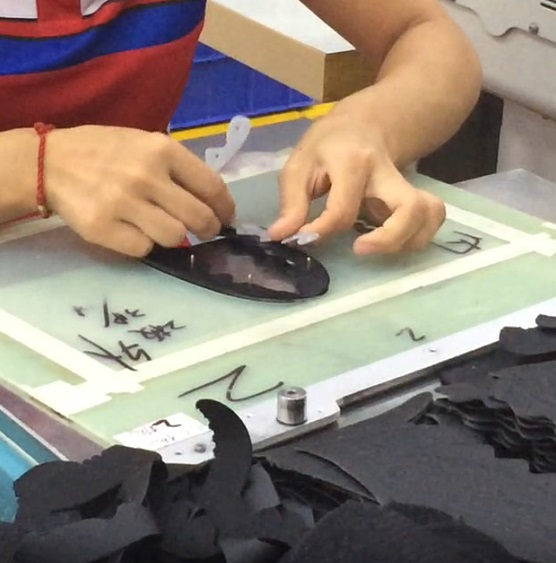}
\caption{}
\end{subfigure}
\begin{subfigure}{0.15\textwidth}
\includegraphics[width=1.0\linewidth, height=3.5cm]{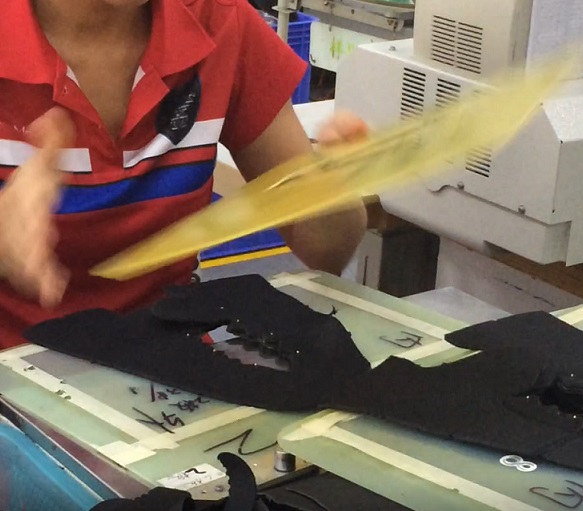}
\caption{}
\end{subfigure}
\begin{subfigure}{0.15\textwidth}
\includegraphics[width=1.0\linewidth, height=3.5cm]{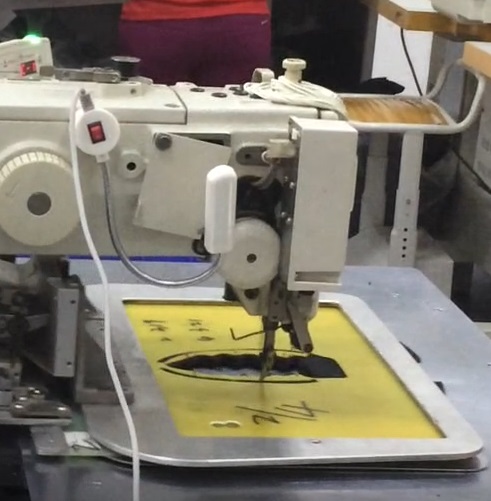}
\caption{}
\end{subfigure}
\caption{The manual procedure in a clothing manufacturer where a human worker is using fixtures to assemble cloth pieces for the following automated sewing: (a) a human worker locates the cloth pieces on the fixtures made by a few pins; (b) the human worker finishes the assembly of cloth pieces; (c) an industrial sewing machine is performing the automated sewing over the assembled template.  Our DOM framework can enable the robot to accomplish this manufacturing process autonomously. }
\label{fig:worker}
\end{figure}

The rest of this paper is organized as follows. We briefly survey the related works in Section~\ref{sec:related}. We give an overview of our DOM framework in Section~\ref{sec:prob}. We present the details of our feature design and extraction in Section~\ref{sec:featuredesign}. We discuss the details of our new controllers in Section~\ref{sec:controllerdesign}. Finally, we demonstrate the performance of our new approach with a set of experiments on a wide variety of soft objects in Section~\ref{sec:experiment} with conclusions in Section~\ref{sec:conclusion}.


\section{Related work}
\label{sec:related}

\subsection{Deformable object manipulation}

Many robotic manipulation methods for deformable objects have been proposed in recent years. Early work~\cite{Saha:2007:MPD,Moll:2006:PPD} used knot theory or energy theory to plan the manipulation trajectories for linear deformable objects like ropes. Some recent work~\cite{Bai:2016:DMC} considered manipulating clothes using dexterous grippers. These works required a complete and accurate knowledge about the object's geometric and deformation parameters and thus are not applicable in practice. 

More practical works used sensors to guide the manipulation process. ~\cite{Matsuno:2006:MDL} used images to estimate the knot configuration. ~\cite{Miller:2011:AGA} used vision to estimate the configuration of a cloth and then leveraged gravity to accomplish folding tasks~\cite{Bell:2010:GNS}. ~\cite{Twardon:2015:ISC} used an RGBD camera to identify the boundary components in clothes. ~\cite{Li:2015:FDO,Li:2015:RUG} first used vision to determine the status of the cloth, then optimized a set of grasp points to unfold the clothes on the table, and finally found a sequence of folding actions.  
Schulman \emph{et al.}~\cite{Schulman:2013:GIR} enabled a robot to accomplish complex multi-step deformation object manipulation strategies by learning from a set of manipulation sequences with depth images to encode the task status. Such learning from demonstration techniques have further been extended using reinforcement learning~\cite{Hadfield:2015:BLW} and tangent space mapping~\cite{Tang:2016:RMD}. 
A deep learning-based end-to-end framework has also been proposed recently~\cite{Yang:2017:RFT}. A complete pipeline for clothes folding tasks including vision-based garment grasping, clothes classification and unfolding, model matching and folding has been described in~\cite{Doumanoglou:2016:FCA}. 

The above methods generally did not explicitly model the deformation parameters of the deformation objects, which is necessary for high-quality manipulation control. Some methods used uncertainty models~\cite{McConachie:2016:BBM} or heuristics~\cite{Berenson:2013:MDO, Wang:2015:AOM} to account for rough deformation models during the manipulation process. Some works required an offline procedure to estimate the deformation parameters~\cite{Bodenhagen:2014:AAR}. There are several recent works that estimated the object's deformation parameters in an online manner and then designed a controller accordingly. Navarro-Alarcon \emph{et al.}~\cite{Alarcon:2013:MFV,Alarcon:2016:ADM,Alarcon:2018:MFV} used an adaptive and model-free linear controller to servo-control soft objects, where the object's deformation is modeled using a spring model~\cite{Hirai:2000:ISP}. \cite{Langsfeld:2016:OLP} learned the models of the part deformation depending on the end-effector force and grasping parameters in an online manner to accomplish high-quality cleaning tasks. A more complete survey about deformable object manipulation in industry is available in\cite{Henrich:2012:RMO}.

\subsection{Deformable object feature design}
Different techniques have been proposed for motion planning for deformable objects. 
Most of the works on deformable object manipulation focus on volumetric objects such as a deforming ball or linear deformable objects such as steerable needles~\cite{Rodriguez:2006:PMC,Frank:2011:EMP,Reed:2011:RAN,Saha:2007:MPD}. 
By comparison, cloth-like thin-shell objects tend to exhibit more complex deformations, forming wrinkles and folds. 
Current solutions for thin-shelled manipulation problems are limited to specific tasks, including folding \cite{Li:2015:FDO}, ironing \cite{Li:2016:MSS}, sewing \cite{Schrimpf:2012:RSI}, and dressing \cite{Clegg:2015:AHD}. 
On the other hand, deformable body tracking solves a simpler problem, namely inferring the 3D configuration of a deformable object from sensing inputs. 
There is a body of literature on deformable body tracking that deals with inferring the 3D configuration from sensor data \cite{Schulman:2013:TDO,Wang:2015:DCM,Leizea:2017:RTV}.
However, these methods usually require a template mesh to be known a priori, and the allowed deformations are relatively small. Recently, some template-less approaches have also been proposed,  including~\cite{Newcombe:2015:dynamicfusion,innmann:2016:volumedeform,dou:2016:fusion4d}, that tackle the tracking and reconstruction problems jointly and in real-time,

Rather than requiring a complete 3D reconstruction of the entire object, a visual-servo controller only uses single-view observations about the object as the input. However, even the single-view observation is high-dimensional. Thus, previous DOM methods use various feature extraction and dimensionality-reduction techniques, including SIFT-features~\cite{Li:2015:FDO} and combined depth and curvature-based features~\cite{Doumanoglou:2016:FCA,Doumanoglou:2014:AAR}. Recently, deep neural-networks became a mainstream general-purpose feature extractor. 
They have also been used for manipulating low-DOF articulated bodies \cite{Levine:2016:ETD} and for DOM applications~\cite{yang2017repeatable}. However, visual feature extraction is always decoupled from controller design in all these methods.

\subsection{Deformable object controller optimization}
In robotics, reinforcement learning~\cite{Sutton:1998:IRL}, imitation learning ~\cite{Hussein:2017:ILS}, and trajectory optimization~\cite{Stengel:1986:SOC} 
have been used to compute optimal control actions.
Trajectory optimization, or a model-based controller, has been used in ~\cite{Li:2015:FDO,Li:2016:MSS,Lee:2014:USR} for DOM applications. Although they are accurate, these methods cannot achieve real-time performance. 
For low-DOF robots such as articulated bodies~\cite{Todorov:2012:MUJOCO}, researchers have developed real-time trajectory optimization approaches, but it is hard to extend them to deformable models due to the high simulation complexity of such models.
Currently, real-time performance can only be achieved by learning-based controllers \cite{Doumanoglou:2014:ARF,Doumanoglou:2014:AAR,yang2017repeatable}, which use supervised learning to train real-time controllers. However, as pointed out in~\cite{Ross:2011:RIL}, these methods are not robust in handling unseen data. Therefore, we can further improve the robustness by using imitation learning. 
~\cite{Gupta:2016:LDM} used reinforcement learning to control a soft-hand, but the object to be manipulated by the soft-hand was still rigid.

DOM controller design is dominated by visual servoing techniques~\cite{chaumette:2006:visual,hutchinson:1996:tutorial}, which aim to control a dynamic system using visual features extracted from images. They have been widely used in robotic tasks like manipulation and navigation. Recent work includes the use of histogram features for rigid objects.

The dominant approach for DOM tasks is visual servoing techniques ~\cite{chaumette:2006:visual,hutchinson:1996:tutorial}, which aim at controlling a dynamic system using visual features extracted from images. 
These techniques have been widely used in robotic tasks like manipulation and navigation. 
Recent work includes the use of histogram features for rigid objects~\cite{bateux:2017:histograms}.
Sullivan \emph{et al.}~\cite{Sullivan:1996:UAD} use a visual servoing technique to solve the deformable object manipulation problem using active models. 
Navarro-Alarcon \emph{et al.}~\cite{Alarcon:2013:MFV,Alarcon:2016:ADM,Alarcon:2018:MFV} use an adaptive and model-free linear controller to servo-control soft objects, where the object's deformation is modeled using a spring model~\cite{Hirai:2000:ISP}. 
Langsfeld \emph{et al.}~\cite{Langsfeld:2016:OLP} perform online learning of part-deformation models for robot cleaning of compliant objects. Our goal is to extend these visual servoing methods to perform complex tasks on highly deformable materials.

\section{Overview and problem formulation}
\label{sec:prob}

\begin{table}[t]
\setlength{\tabcolsep}{15pt}
\begin{tabular}{ll}
\hline 
Symbol & Meaning   \\
\hline \hline
$\mathcal{C}$ & 3D configuration space of the object  \\
$c$ & a configuration of the object, \\
 & $c= \{\mathbf p\} \cup \{\mathbf q\} \cup \{\mathbf r\}$   \\
$\mathbf p$ & feedback feature points on the object \\
$\mathbf q$ & uninformative points on the object \\
$\mathbf r$ & robot end-effectors' grasping points on the object \\
$\mathcal{O}(c)$ & an observation of object  \\
$\mathbf s$ & the feature vector extracted from $\mathcal{O}(c)$ \\
$c^*$ & target configuration of the object  \\
$\mathbf r^*$ & optimal grasping points returned by the expert	\\
$H$ & interaction function linking velocities of  the feature \\
& space to the end-effector configuration space	\\
$\mathbf{dist}_s$ & distance measure in the feature space \\
$\mathbf{dist}_{\pi}$ & distance measure in the policy space \\
$\pi$ & DOM-control policy	\\
\hline
& Random forest DOM controller \\
\hline \hline
$\alpha$ & parameter for random forest topology	\\
$\beta$ & leaf parameter	\\
$\gamma$ & confidence of leaf-node	\\
$\theta$ & parameter sparsity	\\
$K$ & the number of decision trees	\\
$l_k$ & a leaf-node of $k$-th decision tree	\\
$l_k(\mathcal{O}(c))$ & the leaf-node that $\mathcal{O}(c)$ belongs to \\
$\mathcal{L}$ & labeling function for optimal actions	\\
$\mathcal{F}$ & feature transformation for observation	\\
\hline
\end{tabular}
\caption{\label{Fig:param} Symbol table.}
\end{table}

The problem of 3D deformable object manipulation can be formulated as follows. Similar to~\cite{Hirai:2000:ISP}, we describe an object as a set of discrete points, which are classified into three types: manipulated points, feedback points, and uninformative points, as shown in Figure~\ref{fig:problem-formulation}. The manipulated points correspond to the positions on the object that are grabbed by the robot and are thus fixed relative to the robotic end-effectors. The feedback points correspond to the object surface regions that define the task goals and are involved in the visual feedbacks. The uninformative points correspond to other regions on the object. Given this setup, the deformable object manipulation problem is about how to move the manipulated points to drive the feedback points toward a required target configuration. 

Our goal is to optimize a realtime feedback controller to deform the object into a desired target configuration. We denote the 3D configuration space of the object as $\mathcal{C}$. Typically, a configuration $c\in\mathcal{C}$ can be discretely represented as a 3D mesh of the object and the dimension of $\mathcal{C}$ can be in the thousands. However, we assume that only a partial observation $\mathcal{O}(c)$ is known, which in our case is an RGB-D image from a single, fixed point of view. 

Since the manipulation of a deformable object is usually executed at a low speed to avoid vibration, we can reasonably assume that the object always lies in the quasi-static state where the internal forces caused by the elasticity of the object are balanced with the external force applied by the end-effector on the manipulated points. We use a potential energy function $U(\mathbf p, \mathbf q, \mathbf r)$ to formulate the elasticity of the deformable object, where the potential energy depends on all the points on the object and vectors $\mathbf p$, $\mathbf q$, and $\mathbf r$ represent the stacked coordinates of all feedback points, uninformed points, and manipulated points, respectively. The equation of equilibrium for the object can then be described as follows:
\begin{align}
\frac{\partial U}{\partial \mathbf p} &= \mathbf 0, \label{eq:equf} \\
\frac{\partial U}{\partial \mathbf q} &= \mathbf 0, \\
\frac{\partial U}{\partial \mathbf r} - \mathbf F &= \mathbf 0, \label{eq:equm}
\end{align}
where $\mathbf F$ is the external force vector applied on the manipulated points. To solve the above equations, we need exact knowledge about that deformable object's deformation property, which is either not available or difficult to acquire in many applications. To cope with this issue, we first simplify the potential energy function to only depend on $\mathbf p$ and $\mathbf r$, which is reasonable because the uninformed points are usually far from the manipulated and feedback points and thus their influence on the manipulation process is small and can be neglected. Next, we perform a Taylor expansion of Equation~\ref{eq:equm} and Equation~\ref{eq:equf} about the current static equilibrium status $(\bar{\mathbf p}, \bar{\mathbf r})$, and the equation of equilibrium implies a relationship between the relative displacements of feedback points and manipulated points:
\begin{equation}
A(\delta \mathbf p) + B(\delta \mathbf r) = \mathbf 0,
\end{equation}
where $\delta \mathbf p = \mathbf p -\bar{\mathbf p}$ and $\delta \mathbf r = \mathbf r- \bar{\mathbf r}$ are the displacements relative to the equilibrium for feedback points and manipulated points, respectively. The functions $A(\cdot)$ and $B(\cdot)$ are nonlinear in general, though they can be linear in some special cases. For instance, when only performing the first order Taylor expansion as in~\cite{Alarcon:2016:ADM}, $A(\delta \mathbf p) = \frac{\partial^2 U}{\partial \mathbf r \partial \mathbf p}$ and $B(\delta \mathbf r) = \frac{\partial^2 U}{\partial (\mathbf r)^2} \delta \mathbf r$ are two linear functions. In this paper, we allow $A(\cdot)$ and $B(\cdot)$ to be general functions to estimate a better model for the deformable object manipulation process. 

We further assume the function $B(\cdot)$ to be invertible, which implies
\begin{equation}
\label{eq:D}
\delta \mathbf r = D(\delta \mathbf p),
\end{equation}
where $D = A \circ B^{-1}$ is the mapping between the velocities of the feedback points and the manipulated points. In this way, we can determine a suitable end-effector velocity via feedback control $\delta_r =  D(\eta \cdot \Delta \mathbf p)$ to derive the object toward its goal state, where $\Delta \mathbf p = \mathbf p^* - \mathbf p$ is the difference between the desired vector $\mathbf p^*$ and the current vector of the feedback points and $\eta$ is the feedback gain. 

\begin{figure}[t] 
\centering
\includegraphics[width=0.9\linewidth]{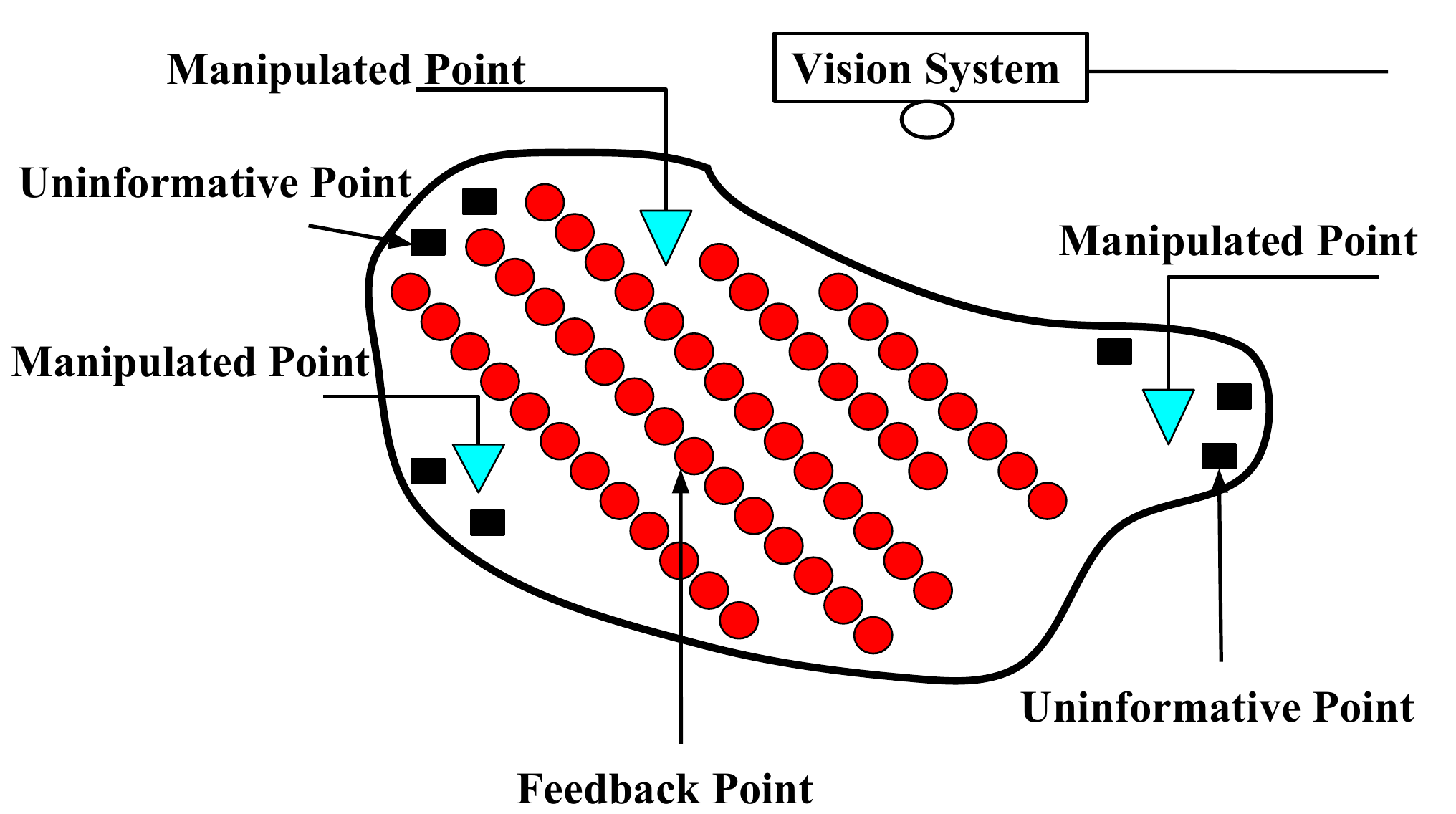}
\caption{We model a soft object using three classes of points: manipulated points $\mathbf r$, feedback points $\mathbf p$, and uninformative points $\mathbf q$.}
\label{fig:problem-formulation}
\end{figure}

However, the velocities of feedback points generally cannot be directly used in the control, because, in practice, these velocities are measured using visual tracking of deformable objects and thus are likely to be noisy or even unreliable when tracking fails. More importantly, a soft object needs a large set of feedback points to characterize its deformation, but a robotic manipulation system usually only has a few end-effectors. Thus ,the $D(\cdot)$ function in Equation~\ref{eq:D} is a mapping from the high-dimensional space of feedback point velocities to the low-dimensional space of manipulated point velocities. Such a system is extremely underactuated and the convergence speed of the control would be slow. 

To deal with aforementioned difficulties, we replace the feedback points $\mathbf p$ with a low-dimensional feature vector $\mathbf s = C(\mathbf p)$, which is extracted from the observed part of the deformable object, where $C(\cdot)$ is the feature extraction function. Around the equilibrium state, we have $\delta \mathbf s = C'(\bar{\mathbf p}) \delta \bar{\mathbf p}$, and we can rewrite the equilibrium function using the feature vector as
\begin{equation}
\label{eq:mapping}
\delta \mathbf r = D(C'(\bar{\mathbf p})^{-1} \delta \mathbf s) \triangleq H(\delta \mathbf s),
\end{equation}
where the function $H(\cdot)$ is called the \textbf{interaction function}. 

The manipulation problem of deformable objects can finally be described as the following: given the desired state $\mathbf s^*$ of an object in the feature space, design a controller that learns the interaction function $H(\cdot)$ in an offline or online manner, and outputs the control velocity $\delta \mathbf r$ decreasing the distance between $\mathbf s$ and $\mathbf s^*$, which are the object's current state and desired goal state in the feature space. More formally, the controller is: 
\begin{equation}
\delta \mathbf r =  H\big(\eta \cdot \textbf{dist}_s (\mathbf s, \mathbf s^*)\big),
\end{equation}
where $\textbf{dist}_s$ is the distance metric in the feature space and $\eta$ is the feedback gain. If we assume $H$ to be a linear function and $\mathbf{dist}_s$ to be a linear metric, then the controller degrades to the linear visual-servo controller: 
\begin{equation}
\label{eq:basicfb}
\delta \mathbf r = \eta \cdot (\mathbf s - \mathbf s^*).
\end{equation}

In addition, since $\delta \mathbf r = \mathbf r - \bar{\mathbf r}$ and $\bar{\mathbf{r}}$ is usually a known vector, we can also write the controller as a policy $\pi$:
\begin{equation}
\mathbf r = H\big(\eta \cdot \textbf{dist}_s (\mathbf s, \mathbf s^*)\big) + \bar{\mathbf r} = \pi\big(\mathbf s|\mathcal{O}(c)\big),
\end{equation}
where $\mathcal O(c)$ is the observation about the current state of the deformable object and is equivalent to $\mathbf s$. This form is more popular in the policy optimization literature.

\section{Feature extraction: manual design}
\label{sec:featuredesign}
For rigid body manipulation, an object's state can be completely described by its centroid position and orientation. However, such global features are not sufficient to determine the configuration of a deformable object. As mentioned in Section~\ref{sec:prob}, we extract a feature vector $\mathbf s$ from the observed feedback points to represent the object's state. One common practice for feature extraction is to manually design task-dependent features. Here we present a set of features that are able to provide high-quality representations for DOM tasks, as will be demonstrated in our experiments.

\subsection{Global features}
\label{sec:globalfeatures}

\subsubsection{Centroid} The centroid feature $\mathbf c \in \mathcal R^3$ is computed as the geometric center of the 3D coordinates of all the observed feedback points:
\begin{equation}
\label{eq:centroid}
	\mathbf c = \left ( \mathbf p_1 + \mathbf p_2 + \cdots + \mathbf p_K \right ) / K.
\end{equation}

\subsubsection{Positions of feedback points} Another way to describe a deformable object's configuration is to directly use the positions of all observed feedback points as part of $\mathbf x$, i.e. 
\begin{equation}
\label{eq:position}
	\mathbf \rho = \left[ (\mathbf p_1)^T, (\mathbf p_2)^T, \cdots, (\mathbf p_K)^T \right]
\end{equation}

\subsection{Local features}
\label{sec:localfeatures}

\subsubsection{Distance between points} The distance between each pair of feedback points intuitively measures the stretch of deformable objects. This feature is computed as 
\begin{equation}
\label{eq:distance}
	d = \| \mathbf p_1 - \mathbf p_2\|^2,
\end{equation}
where $\mathbf p_1$ and $\mathbf p_2$ are a pair of feedback points.

\subsubsection{Surface variation indicator} For deformable objects with developable surfaces, the surface variation around each feedback point can measure the local geometric property. Given a feedback point $\mathbf p$, we can compute the covariance matrix $\Omega$ for its neighborhood and the surface variation $\sigma$ is then computed as
\begin{equation}
\label{eq:variation}
	\sigma = \frac{\lambda_0}{\lambda_0+\lambda_1+\lambda_2}
\end{equation}
where $\lambda_0$, $\lambda_1$, $\lambda_2$ are eigenvectors of $\Omega$ with $\lambda_0\leq\lambda_1\leq\lambda_2$.

\subsubsection{Extended FPFH from VFH} Extended FPFH is the local descriptor of VFH and is based on Fast Point Feature Histograms (FPFH)~\cite{Rusu:2009:FPF}. Its idea is to use a histogram to record differences between the centroid point $\mathbf p_c$ and its normal $\mathbf n_c$ with all other points and normals.

\subsection{Feature selection for different tasks} 
According to our experience, for 1D deformable objects such as a rope, we can use centroid, distance or the coordinates of several marked points; for 2D deformable objects like deformable sheets, a surface variation indicator or extended FPFH are more effective for representing the states of the deformable objects. The features such as centroid and distance are also used in~\cite{Alarcon:2016:ADM}, and we will use these features to compare our method with~\cite{Alarcon:2016:ADM}. Our learning-based controllers (discussed later) are not very sensitive to the feature selection because they can adaptively weight the importance of different features. Nevertheless, we can introduce task-relevant prior knowledge into the controller by manually designing task-specific features, which would be helpful for the robustness and effectiveness of the controlling process.

\section{Feature extraction: data-driven design}
\label{sec:featuredesign2}

Manual feature design is tedious and cannot be optimized for a given task. Here, we present a novel feature representation, a Histogram of Oriented Wrinkles (HOW), to describe the shape variation of a highly deformable object like clothes. 
These features are computed by applying Gabor filters and extracting the high-frequency and low-frequency components.
We precompute a visual feedback dictionary using an offline training phase that stores a mapping between these visual features $\mathbf s$ and the velocity of the end-effector $\delta \mathbf r$. At runtime, we automatically compute the goal configurations based on the manipulation task and use sparse linear representation to compute the velocity of the controller from the dictionary. 

\begin{figure*}[!htbp]
  \centering
  \includegraphics[width=1\textwidth]{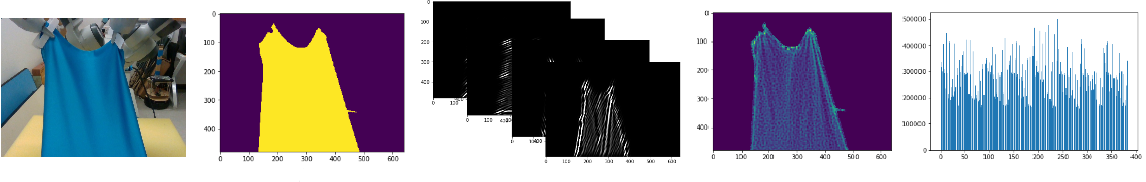}
\put(-470,0){(1)}
\put(-375,0){(2)}
\put(-265,0){(3)}
\put(-155,0){(4)}
\put(-50,0){(5)}
  \caption{
  Pipeline for HOW-feature computation:
  Given the input image (1), we use the following stages:
  (2) foreground segmentation using Gaussian mixture;
  (3) image filtering with multiple orientations and wavelengths of a Gabor Kernel;
  (4) discretization of the filtered images to form grids of a histogram;
  (5) stacking the feature matrix to a single column vector.}
  \label{fig:HOW:feature}
\end{figure*}

\begin{figure}[t]
  \centering
  \includegraphics[width=0.5\textwidth]		{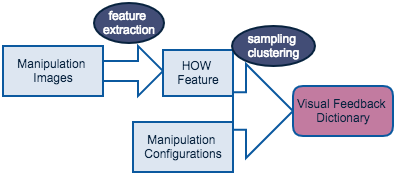}
  \caption{ 
  Computing the visual feedback dictionary: The input to this offline process is the recorded manipulation data with images and the corresponding configuration of the robot's end-effector. The output is a visual feedback dictionary, which links the velocity of the features and the controller.}
  \label{fig:HOW:offline}
\end{figure}

\begin{figure}[t]
  \centering
  \includegraphics[width=0.5\textwidth]{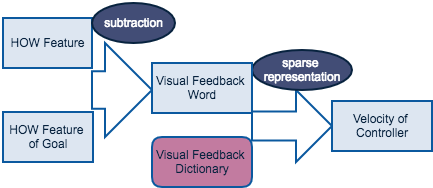}
  \caption{
  At runtime, the controller leverages the HOW features in two stages. First, we extract HOW features from the visual input and compute the visual feedback word by subtracting the extracted features from the features of the goal configuration. Then, we apply the sparse representation and compute the velocity of the controller for manipulation.}
  \label{fig:HOW:online}
\end{figure}

\subsection{Histogram of deformation model feature}
\label{sec:featuredesign2:feat}
Here we present our algorithm to compute the HOW-features from the camera stream. These are low-dimensional features of highly deformable material.   

The pipeline of our HOW-feature computation process is shown in Figure~\ref{fig:HOW:feature}, and it has three stages:
\subsubsection{Foreground segmentation}
To find the foreground partition of a deformable object, we apply the Gaussian mixture algorithm~\cite{lee:2005:effective} on the RGB data captured by the camera. The intermediate result of segmentation is shown in Figure~\ref{fig:HOW:feature}(2). 

\subsubsection{Deformation enhancement}
To model the high dimensional characteristics of the highly deformable material, we use deformation enhancement. This is based on the perceptual observation that most deformations can be modeled by shadows and shape variations. 
Therefore, we extract the features corresponding to shadow variations by applying a Gabor transform~\cite{lee:1996:image} to the RGB image. This results in the enhancement of the ridges, wrinkles, and edges (as shown in Figure ~\ref{fig:HOW:feature}). 
We convolve the $N$ deformation filters $\{d_i\}$  to the image $I$ and represent the result as $\{d_i(I)\}$.

In the spatial domain, a 2D Gabor filter is a Gaussian kernel function modulated by a sinusoidal plane wave~\cite{daugman:1985:uncertainty} and it has been used to detect wrinkles~\cite{yamazaki:2009:cloth}. The 2D Gabor filter can be represented as follows:
\begin{equation}
g(x,y;\lambda, \theta, \phi, \sigma, \gamma) = \exp(-\frac{ {x'}^2 + \gamma ^2 {y'}^2}{2 \sigma ^2}) \sin(2 \pi \frac{x'}{\lambda} + \phi),
\end{equation}
where $x' = x \cos\theta + y \sin\theta$, $y' = -x \sin\theta + y\cos\theta$, $\theta$ is the orientation of the normal to the parallel stripes of the Gabor filter, $\lambda$ is the wavelength of the sinusoidal factor, $\phi$ is the phase offset, $\sigma$ is the standard deviation of the Gaussian, and $\gamma$ is the spatial aspect ratio. When we apply the Gabor filter to our deformation model image, the choices for  wavelength ($\lambda$) and orientation ($\theta$) are the key parameters with respect to the wrinkles of deformable materials. As a result, the deformation model features consist of multiple Gabor filters ($d_{1, \cdots, n}(I)$) with different values of wavelengths ($\lambda$) and orientations ($\theta$).

\subsubsection{Grids of histogram}
A histogram-based feature is an approximation of the image which can reduce the data redundancy and extract a high-level representation that is robust to local variations in an image. Histogram-based features have been adopted to achieve a general framework for photometric visual servoing~\cite{bateux:2017:histograms}.
Although the distribution of the pixel value can be represented by a histogram, it is also significant to represent the position in the feature space of the deformation to achieve the manipulation task. Our approach is inspired by the study of grids in a Histogram of Oriented Gradient~\cite{dalal:2005:histograms}, which is computed on a dense grid of uniformly spatial cells. 

We compute the grids of histogram for the deformation model feature by dividing the image into small spatial regions and accumulating local histogram of different filters ($d_i$) of the region. For each grid, we compute the histogram in the region and represent it as a matrix. We vary the grid size and compute matrix features for each size. Finally, we represent the entries of a matrix as a column feature vector. 
The complete feature extraction process is described in Algorithm~\ref{alg:feat}.

\begin{algorithm}[!htp!]
  \caption{Computing HOW features}
  \label{alg:feat}
  \begin{algorithmic}[1]
   \renewcommand{\algorithmicrequire}{\textbf{Input:}}
   \renewcommand{\algorithmicensure}{\textbf{Output:}}
    \Require image $I$ of size $(w_I, h_I)$, deformation filtering or Gabor kernels $\{d_1 \cdots d_{N_d}\}$, grid size set$\{g_1,\cdots,g_{N_g}\}$.
    \Ensure feature vector $\mathbf s$
	\For {$i=1,\cdots,N_d$}
    \For {$j=1,\cdots,N_g$}
      \For {$(w,h)=(1,1),\cdots, (w_I,h_I)$}
          \LineComment{compute the indices using truncation}
          \State $(x,y) =(\textsc{trunc}(w / g_j), \textsc{trunc}(h / g_j))$ 
          \LineComment{add the filtered pixel value to the specific bin of the grid}
          \State $\mathbf s_{i,j,x,y} = \mathbf s_{i,j,x,y} + d_i(I(w,h))$ 
	  \EndFor
    \EndFor
    \EndFor
    \State \Return $\mathbf s$
  \end{algorithmic}
\end{algorithm}

The HOW-feature has several advantages. It captures the deformation structure, which is based on the characteristics of the local shape.
Moreover, it uses a local representation that is invariant to local geometric and photometric transformations. This is useful when the translations or rotations are much smaller than the local spatial or orientation grid size.

\subsection{Manipulation using the visual feedback dictionary}
\label{sec:HOW:learning}
Here we present our algorithm for computing the visual feedback dictionary. At runtime, this dictionary is used to compute the corresponding velocity ($\Delta \mathbf r$) of the controller based on the visual feedback ($\Delta s(I)$). 

\begin{figure}[b]
  \centering
  \includegraphics[width=0.5\textwidth]{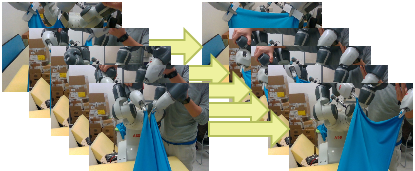}
  \caption{
  Visual feedback dictionary: The visual feedback word is defined by the difference between the visual features $\Delta \mathbf s = \mathbf s - \mathbf s^*$ and the controller positions $\Delta \mathbf r = \mathbf r - \mathbf r^*$. The visual feedback dictionary $\{\{\Delta \mathbf s^{(i)}\},\{\Delta \mathbf r^{(i)}\}\}$ consists of the visual feedback words computed. We show the initial and final states on the left and the right, respectively.
  }
  \label{fig:HOW:dictionary}
\end{figure}

\subsubsection{Building the visual feedback dictionary}
As shown in Figure~\ref{fig:HOW:offline}, the inputs to the offline training phase are a set of images and end-effector configurations ($\{I^{(t)}\}$, $\{\mathbf r^{(t)}\}$) and the output is the visual feedback dictionary ($\big\{\{\Delta \mathbf s^{(i)}\},\{ \Delta \mathbf r^{(i)}\}\big\}$). 

For the training process, the end-effector configurations, ($\{\mathbf r^{(t)}\}$), are either collected by human tele-operation or generated randomly. A single configuration ($\mathbf r^{(i)}$) of the robot is a column vector of length $N_{dof}$, the number of degrees-of-freedom to be controlled. $\mathbf r \in \mathbb{R}^{N_{dof}}$ and its value is represented in the configuration space.

In order to compute the mapping $H(\cdot)$ in Equation~\ref{eq:mapping} from the visual feedback to the velocity, we need to transform the configurations $\{\mathbf r^{(t)}\}$ and image stream $\{I^{(t)}\}$ into velocities $\{\Delta \mathbf r^{(t)}\}$ and the visual feature feedback $\{\Delta \mathbf s^{(t)}\}$, respectively. One solution is to select a fixed time step $\Delta t$ and to represent the velocity in both the feature and the configuration space as:
\begin{align}
\Delta \mathbf r^{(t)} &= \frac{\mathbf r^{(t+\frac{\Delta t}{2})} - \mathbf r^{(t-\frac{\Delta t}{2})}}{\Delta t / C_{fr}}, \\
\Delta \mathbf s(I^{(t)}) &= \frac{\mathbf s(I^{(t+\frac{\Delta t}{2})}) - \mathbf s(I^{(i-\frac{\Delta t}{2})})}{\Delta t / C_{fr}},
\end{align}
where $C_{fr}$ is the frame rate of the captured video.

However, sampling by a fixed time step ($\Delta t$) leads to a limited number of samples ($N_{I} - \Delta t$) and can result in over-fitting. To overcome this issue, we break the sequential order of the time index to generate more training data from ${I^{(t)}}$ and ${\mathbf r^{(t)}}$. 
In particular, we assume that the manipulation task can be observed as a Markov process~\cite{baum:1972:inequality} and that each step is independent from every other. In this case, the sampling rates are given as follows, (when the total sampling amount is $n$):
\begin{align}
\Delta \mathbf r^{(t)} &= \frac{\mathbf r^{(l_{t})}- \mathbf r^{(l_{t+n})}} { (l_{t} - l_{t+n}) / C_{fr}}, \\
\Delta \mathbf s(I^{(t)}) &= \frac{\mathbf s(I^{(l_{t})}) - \mathbf s(I^{(l_{t+n})})} { (l_{t} - l_{t+n}) / C_{fr}},
\end{align} 
where $l_{1,\cdots, 2n}$ is a set of randomly generated indices and $l_t \in [1, \cdots, N_I]$.
To build a more concise dictionary, we also apply K-Means Clustering~\cite{hartigan:1979:algorithm} on the feature space, which enhances the performance and prevents the over-fitting problem.   

In practice, the visual feedback dictionary can be regarded as an approximation of the interaction function $H$ in Equation~\ref{eq:mapping}). The overall algorithm to compute the dictionary is given in Algorithm~\ref{alg:training}.

\subsubsection{Sparse representation}
At runtime, we use sparse linear representation~\cite{donoho:2003:optimally} to compute the velocity of the controller from the visual feedback dictionary. 
These representations tend to assign zero weights to most irrelevant or redundant features and are used to find a small subset of the most predictive features in the high dimensional feature space.
Given a noisy observation of a feature at runtime ($\mathbf s$) and the visual feedback dictionary constructed by features $\{\Delta \mathbf s^{(i)}\}$ with labels $\{\Delta \mathbf r^{(i)}\}$, we represent $\Delta \mathbf s$ by $\hat{\Delta \mathbf s}$, which is a sparse linear combination of $\{\Delta \mathbf s^{(i)}\}$, where $\beta$ is the sparsity-inducing $L_1$ term. To deal with noisy data, we use the $L_2$ norm on the data-fitting term and formulate the resulting sparse representation as:
\begin{equation}
\hat{\beta} = \argmin_\beta\big( \|\min(\Delta \mathbf s - \sum_i \beta_i \Delta \mathbf s^{(i)} )\|_2^2+\alpha \|\beta\|_1 \big), \label{alpha1}
\end{equation}
where $\alpha$ is a slack variable that is used to balance the trade-off between fitting the data perfectly and using a sparse solution.
The sparse coefficient $\beta^*$ is computed using a minimization formulation:
\begin{equation}
   \beta^* =  \argmin_\beta \big(\sum_{i} {\rho(\Delta \mathbf s_i^* - \sum_{j} {\beta_j \Delta \mathbf s^{(j)}_i })} \label{alpha2}
        + \alpha \sum_{j} {\|\beta_j\|_1}\big).
\end{equation}
After $\hat{\beta}$ is computed, the observation $\Delta \hat{\mathbf s}$ and the probable label $\Delta \hat{\mathbf r}$ can be reconstructed by the visual feedback dictionary:
\begin{equation}
\Delta \hat{\mathbf s} = \sum_i \hat{\beta_i} \Delta \mathbf s^{(i)} \quad \quad 
\Delta \hat{\mathbf r} = \sum_i \hat{\beta_i} \Delta \mathbf r^{(i)}
\end{equation}
The corresponding ${\Delta \mathbf r}^*$ of the $i-$th DOF in the configuration is given as: 
\begin{equation}
   \Delta \mathbf r_i^* = \sum_{j} \beta^*_j \mathbf \Delta r_i^{(j)}, 
\end{equation}
where $\Delta \mathbf s^{(j)}_i$ denotes the $i$-th datum of the $j$-th feature, $\Delta \mathbf s^*_i$ denotes the value of the response, and the norm-1 regularizer $\sum_{j} {\|\beta_j\|_1}$ typically results in a sparse solution in the feature space. 

\begin{algorithm}[t]
  \caption{Building the visual feedback dictionary}
  \label{alg:training}
  \begin{algorithmic}[1]
   \renewcommand{\algorithmicrequire}{\textbf{Input:}}
   \renewcommand{\algorithmicensure}{\textbf{Output:}}
    \Require image stream $\{I^{(t)}\}$ and positions of end-effectors $\{\mathbf r^{(t)}\}$ with sampling amount $n$, dictionary size $N_{dic}$
    \Ensure Visual feedback dictionary $\{\{\Delta \mathbf s_d^{(i)}\},\{ \Delta \mathbf r_d^{(i)}\}\}$
    \State $\{\Delta \mathbf s_d\} = \{\}$, $\{\Delta \mathbf r_d\} = \{\}$
    \LineComment{generate random indices for sampling}
    \State $l = N_I \textsc{rand}(2n)$ 
    \For {$i = 1, \cdots, n$}
    	\State $\Delta \mathbf s^{(i)}$ = $s(I^{(l(i))}) - s(I^{(l(i+n))})$ \Comment{sampling}
    	\State $\Delta \mathbf r^{(i)}$ = $\mathbf r^{(l(i))} - \mathbf r^{(l(i+n))}$ \Comment{sampling}
    \EndFor
    \LineComment{compute the centers of the feature set for clustering}
    \State  $centers = \textsc{K-MEANS}(\{\Delta \mathbf s^{(i)}\}, N_{dic})$ 
    \For {$i = 1, \cdots, N_{dic}$}
    	\State $j = \argmin_i \|centers[i] - \mathbf s^{(i)})\|$
        \State $\{\Delta \mathbf s_d\} = \{\Delta \mathbf s_d, \Delta \mathbf s^{(j)}\}$ $\{\Delta \mathbf r_d\} = \{\Delta \mathbf r_d, \Delta \mathbf r^{(j)}\}$
    \EndFor
   \State \Return $\{\Delta \mathbf s_d\},\{\Delta \mathbf r_d\}$ 
  \end{algorithmic}
\end{algorithm}

\subsubsection{Goal configuration and mapping}
\label{sec:goal}
We compute the goal configuration and the corresponding HOW-features based on the underlying manipulation task at runtime. Based on the configuration, we compute the velocity of the end-effector. The different ways to compute the goal configuration are:
\begin{itemize}
\item To manipulate deformable objects to a single state $c^*$, the goal configuration can be represented simply by the visual feature of the desired state $\mathbf s^*=\mathbf s(I^*)$.
\item To manipulate deformable objects to a hidden state $c^*$, which can be represented by a set of states of the object $c^* = \{c_1, \cdots, c_n\}$
as a set of visual features $\{\mathbf s(I_1),\cdots,\mathbf s(I_n)\}$. We modify the formulation in Equation~\ref{eq:mapping} to compute $\delta \mathbf r$ as:
\begin{equation}
\label{eq:f_h1}
\delta \mathbf r = \min_i H\Big(\eta \big(\mathbf s(I) - \mathbf s(I_j) \big) \Big)
\end{equation}
\item For a complex task, which can be represented using a sequential set of states $\{c_1, \cdots, c_n\}$, we estimate the sequential cost of each state as $\text{cost}(c_i)$. We use a modification that tends to compute the state with the lowest sequential cost:
\begin{equation}
\label{eq:f_h2}
i^* = \argmin_i \bigg (\text{cost}(c_i) + H\Big(\eta \big(\mathbf s(I)- \mathbf s(I_i) \big) \Big) \bigg).
\end{equation}
After $i^*$ is computed, the velocity for state $c_i$ is determined by $\mathbf s(I_{i^*})$, and $c_i$ is removed from the set of goals for subsequent computations.
\end{itemize}

\section{Controller design: nonlinear Gaussian Process Regression}
\label{sec:controllerdesign}
\subsection{Interaction function learning}
\label{sec:ldf}
Unlike many previous methods that assume the interaction function $H(\cdot)$ to be a linear function, here we consider $H(\cdot)$ as a general and highly nonlinear function determining how the movement of the manipulated points is converted into the feature space. Learning the function $H$ requires a flexible and non-parametric method. Our solution is to use Gaussian Process Regression (GPR) to fit the interaction function in an online manner.  

GPR is a nonparametric regression technique that defines a distribution over functions and in which the inference takes place directly in the functional space given the covariance and mean of the functional distribution. For our manipulation problem, we formulate the interaction function $H$ as a Gaussian process:
\begin{equation}
	H \sim GP \left( m(\delta \mathbf s),\,\, k(\delta \mathbf s, \delta \mathbf s') \right)
\end{equation}
where $\delta \mathbf s$ still denotes the velocity in the feature space.  For the covariance or kernel function $k(\delta \mathbf s,\delta \mathbf s')$, we use the Radius Basis Function (RBF) kernel: $k(\delta \mathbf s,\delta \mathbf s') = \exp(-\frac{\|\delta \mathbf s- \delta \mathbf s'\|^2}{2\sigma_{RBF}^2})$, where the parameter $\sigma_{RBF}$ sets the spread of the kernel. For the mean function $m(\delta \mathbf s)$, we use the linear mean function $m(\delta \mathbf s) = \mathbf W \delta \mathbf s$, where $\mathbf W$ is the linear regression weight matrix. We choose to use a linear mean function rather than the common zero mean function, because previous work~\cite{Alarcon:2016:ADM} has shown that the adaptive Jacobian method, which can be considered as a special version of our method using the linear mean function, is able to capture a large part of the interaction function $H$. As a result, a linear mean function can result in faster convergence of our online learning process and provide a relatively accurate prediction in the unexplored region in the feature space. The matrix $W$ is learned online by minimizing a squared error $Q = \frac{1}{2}\|\delta \mathbf r-\mathbf W\delta \mathbf s\|^2$ with respect to the weight matrix $\mathbf W$. 

Given a set of training data in terms of pairs of feature space velocities and manipulated point velocities $\{(\delta \mathbf s_t, \delta \mathbf r_t)\}_{t=1}^N$ during the previous manipulation process, the standard GPR
computes the distribution of the interaction function as a Gaussian process $H(\delta \mathbf s) \sim \mathcal N(\mu(\delta \mathbf s), \sigma^2(\delta \mathbf s))$, where GP's mean function is 
\begin{align}
\label{eq:mu}
\mu(\delta \mathbf s) &= m(\delta \mathbf s) + \mathbf k^T(\delta \mathbf S, \delta \mathbf s)\cdot [\mathbf K(\delta \mathbf S, \delta \mathbf S) + \sigma_n^2 \mathbf I]^{-1}\notag \\
&\quad\cdot (\delta \mathbf R -m(\delta \mathbf S)) 
\end{align}
and GP's covariance function is 
\begin{align}
\label{eq:sigma}
\sigma^2(\delta \mathbf s) &= k(\delta \mathbf s,\delta \mathbf s) - \mathbf k^T(\delta \mathbf S,\delta \mathbf s)\cdot [\mathbf K(\delta \mathbf S,\delta \mathbf S) + \sigma_n^2 \mathbf I]^{-1}  \notag\\
&\quad \cdot \mathbf k(\delta \mathbf S, \delta \mathbf s). 
\end{align}
Here $\delta \mathbf S$ and $\delta \mathbf R$ are matrices corresponding to the stack of $\{\delta \mathbf s_t\}_{t=1}^N$ and $\{\delta \mathbf r_t\}_{t=1}^N$ in the training data, respectively. $\mathbf K$ and $\mathbf k$ are matrices and vectors computed using a given covariance function $k(\cdot, \cdot)$. The matrix $\mathbf A = \mathbf K + \sigma_n^2 \mathbf I$ is called the Gram matrix, and the parameter $\sigma_n$ estimates the uncertainty or noise level of the training data. 

\subsection{Real-time online GPR}
\label{sec:FO-GPR}

In the deformation object manipulation process, the data $(\delta \mathbf s_t, \delta \mathbf r_t)$ is generated sequentially. Thus, at each time step $t$, we need to update the GP interaction function $H_t(\delta \mathbf s) \sim \mathcal N(\mu_t(\delta \mathbf s), \sigma^2(\delta \mathbf s))$ in an interactive manner, with
\begin{align}
\label{eq:mut}
\mu_t(\delta \mathbf s) &= m(\delta \mathbf s) + \mathbf k^T(\delta \mathbf S_t, \delta \mathbf s) \cdot [\mathbf K(\delta \mathbf S_t, \delta \mathbf S_t) + \sigma_n^2 \mathbf I]^{-1} \notag \\
&\quad \cdot (\delta \mathbf R_t -m(\delta \mathbf S_t))
\end{align}
and
\begin{align}
\label{eq:sigmat}
\sigma^2_t(\delta \mathbf s) &= k(\delta \mathbf s,\delta \mathbf s) - \mathbf k^T(\delta \mathbf S_t,\delta \mathbf s)\cdot [\mathbf K(\delta \mathbf S_t,\delta \mathbf S_t) + \sigma_n^2 \mathbf I]^{-1} \notag \\
&\quad \cdot \mathbf k(\delta \mathbf S_t, \delta \mathbf s).
\end{align}

In the online GPR, we need to perform the inversion of the Gram matrix $\mathbf A_t = \mathbf K(\delta \mathbf S_t, \delta \mathbf S_t) + \sigma_n^2 \mathbf I$ repeatedly with a time complexity $\mathcal O(N^3)$, where $N$ is the size of the current training set involved in the regression. Such cubic complexity makes the training process slow for long manipulation sequences where the training data size $N$ increases quickly. In addition, the growing up of the GP model will reduce the newest data's impact on the regression result and make the GP fail to capture the change of the objects's deformation parameters during the manipulation. This is critical for deformable object manipulation because the interaction function $H$ is derived from the local force equilibrium and thus is only accurate in a small region. 

Motivated by previous work about efficient offline GPR~\cite{Snelson:2006:SGP, Rasmussen:2002:MGP, Snelson:2007:GPA, Duy:2009:LGP}, we here present a novel online GPR method called the \emph{Fast Online GPR} (FO-GPR) to reduce the high computational cost and to adapt to the changing deformation properties while updating the deformation model during the manipulation process. The main idea of FO-GPR includes two parts: 1) maintaining the inversion of the Gram matrix $\mathcal A_t$ incrementally rather using direct matrix inversion; 2) restricting the size of $\mathbf A_t$ to be smaller than a given size $M$ and, if $\mathbf A_t$'s size exceeds that limit, using a selective ``forgetting'' method to replace stale or uninformative data with a fresh new data point. 

\subsubsection{Incremental update of Gram matrix $\mathbf A_t$} Suppose at time $t$, the size of $\mathbf A_t$ is still smaller than the limit $M$. In this case, $A_t$ and $A_{t-1}$ are related by  
\begin{equation}
	\mathbf A_t = 
  \begin{bmatrix}
	\mathbf A_{t-1} & \mathbf b \\ \mathbf b^T & c
  \end{bmatrix},
\end{equation}
where $\mathbf b = \mathbf k(\delta \mathbf S_{t-1},\delta \mathbf s_t)$ and $c = k(\delta \mathbf s_t,\delta \mathbf s_t) + \sigma^2_n$. According to the Helmert–Wolf blocking inverse property, we can compute the inverse of $\mathbf A_t$ based on the inverse of  $\mathbf A_{t-1}$:
\begin{align}
\label{eq:incremA}
	\mathbf A_t^{-1} & = 
    \begin{bmatrix}
    \left(\mathbf A_{t-1}-\frac{1}{c}\mathbf {bb}^T \right)^{-1} & -\frac{1}{r}\mathbf A_{t-1}^{-1} \mathbf b \\ -\frac{1}{r}\mathbf b^T\mathbf A_{t-1}^{-1} & \frac{1}{r}
    \end{bmatrix} \notag \\
    & = 
    \begin{bmatrix}
    \mathbf A_{t-1}^{-1} + \frac{1}{r} \mathbf A_{t-1}^{-1} \mathbf {bb}^T\mathbf A_{t-1}^{-1} & -\frac{1}{r}\mathbf A_{t-1}^{-1} \mathbf b \\ -\frac{1}{r}\mathbf A_{t-1}^T\mathbf A_{t-1}^{-1} & \frac{1}{r}
    \end{bmatrix},
\end{align}
where $r = c-\mathbf b^T\mathbf A_{t-1}^{-1}\mathbf b$. In this way, we achieve the incremental update of the inverse Gram matrix from $\mathbf A_{t-1}^{-1}$ to $\mathbf A_t^{-1}$, and its computational cost is $\mathcal O(N^2)$ rather than $\mathcal O(N^3)$ of direct matrix inversion. This acceleration enables fast GP model updates during the manipulation process.

\subsubsection{Selective forgetting in online GPR} When the size of $\mathbf A_{t-1}$ reaches the limit $M$, we use a ``forgetting'' strategy to replace the most uninformative data with the fresh data points while keeping the size of $\mathbf A_t$ to be $M$. In particular, we choose to forget the $i_*$ data point that is the most similar to other data points in terms of the covariance, i.e.,  
\begin{equation}
\label{eq:opti}
 i_* = \argmax_{i} \sum\nolimits_j\mathbf A[i, j],
\end{equation}
where $\mathbf A[i,j]$ denotes the covariance value stored in the $i$-th row and $j$-th column in $\mathbf A$, i.e., $k(\delta \mathbf s_i, \delta \mathbf s_j) + \sigma_n^2$.

Given the new data $(\delta \mathbf s_t, \delta \mathbf r_t)$, we need to update $\delta \mathbf S_t$, $\delta \mathbf R_t$, and $\mathbf A_t^{-1} = [\mathbf K(\delta \mathbf S_t, \delta \mathbf S_t) + \sigma_n^2 \mathbf I]^{-1}$ in Equations~\ref{eq:mut} and~\ref{eq:sigmat} by swapping data terms related to $\delta \mathbf s_t$ and $\delta \mathbf s_{i_*}$, to update the interaction function $H_t$. 

The incremental update for $\delta \mathbf S_t$ and $\delta \mathbf R^t$ is trivial: $\delta \mathbf S_t$ is identical to $\delta \mathbf S_{t-1}$ except $\delta \mathbf S_t [i_*]$ is $\delta \mathbf s_t$ rather than $\delta \mathbf s_{i_*}$; $\delta \mathbf R_t$ is identical to $\delta \mathbf R_{t-1}$ except $\delta \mathbf R_t [i_*]$ is $\delta \mathbf r_t$ rather than $\delta \mathbf r_{i_*}$.

We then discuss how to update $\mathbf A_t$ from $\mathbf A_{t-1}$. Since $\mathbf A_t - \mathbf A_{t-1}$ is only non-zero at the $i_*$-th column or the $i_*$-th row:
\begin{equation}
(\mathbf A_t - \mathbf A_{t-1})[i,j]=\begin{cases}
0, & i,j \neq i_* \\
k(\delta \mathbf s_i, \delta \mathbf s) - k(\delta \mathbf s_i, \delta \mathbf s_{i_*}), & j = i_* \\
k(\delta \mathbf s_j, \delta \mathbf s) - k(\delta \mathbf s_j, \delta \mathbf s_{i_*}), & i = i_*.
\end{cases} \notag
\end{equation}
This matrix can be written as the multiplication of two matrices $\mathbf U$ and $\mathbf V$, i.e., $\mathbf A_t - \mathbf A_{t-1} = \mathbf U \mathbf V^T$, where
\begin{equation}
\mathbf U = 
\begin{bmatrix}
\mathbf e_{i_*} & (\mathbf I -\frac{1}{2} \mathbf e_{i_*} \mathbf e_{i_*}^T) (\mathbf k_t - \mathbf k_{t-1})
\end{bmatrix} \notag
\end{equation}
and 
\begin{equation}
\mathbf V = 
\begin{bmatrix}
(\mathbf I -\frac{1}{2} \mathbf e_{i_*} \mathbf e_{i_*}^T) (\mathbf k_t - \mathbf k_{t-1}) & \mathbf e_{i_*}
\end{bmatrix}. \notag
\end{equation}
Here $\mathbf e_{i_*}$ is a vector that is all zero except when it is one at the $i_*$-th item, $\mathbf k_t$ is the vector $\mathbf k(\delta \mathbf S_{t}, \delta \mathbf s_t)$ and $\mathbf k_{t-1}$ is the vector $\mathbf k(\delta \mathbf S_{t-1}, \delta \mathbf s_{i_*})$. Both $\mathbf U$ and $\mathbf V$ are size $M \times 2$ matrices. 

Then, using the Sherman-Morrison formula, we get
\begin{align}
\label{eq:incremA2}
\mathbf A_t^{-1} &= (\mathbf A_{t-1} + \mathbf U \mathbf V^T)^{-1}\\
&= \mathbf A_{t-1}^{-1} - \mathbf A_{t-1}^{-1}\mathbf U (\mathbf I + \mathbf V^T \mathbf A_{t-1}^{-1} \mathbf U)^{-1} \mathbf V^T \mathbf A_{t-1}^{-1},  \notag 
\end{align}
which provides the incremental update scheme for the Gram matrix $\mathbf A_t$. Since $\mathbf {I+V}^T\mathbf A_{t-1}^{-1} \mathbf U$ is a $2\times 2$ matrix, its inversion can be computed in $\mathcal O(1)$ time. Therefore, the incremental update computation is dominated by the matrix-vector multiplication and thus the time complexity is $\mathcal O(M^2)$ rather than $\mathcal O(M^3)$. 

A complete description of FO-GPR is shown in Algorithm~\ref{algo:FOGPR}.

\begin{algorithm}
 \caption{FO-GPR}
 \label{algo:FOGPR}
 \begin{algorithmic}[1]
 \renewcommand{\algorithmicrequire}{\textbf{Input:}}
 \renewcommand{\algorithmicensure}{\textbf{Output:}}
 \Require $\delta \mathbf S_{t-1}$, $\delta \mathbf R_{t-1}$, $\mathbf A_{t-1}$, $\delta \mathbf s_t$, $\delta \mathbf r_t$
 \Ensure $\delta \mathbf S_t$, $\delta \mathbf R_t$, $\mathbf A_t^{-1}$ 
 \If{$dim(\mathbf A_{t-1}) < M$}
 \State $\delta \mathbf S_t = [\delta \mathbf S_{t-1}, \delta \mathbf s_t]$ 
 \State $\delta \mathbf R_t = [\delta \mathbf R_{t-1}, \delta \mathbf r_t]$ 
 \State $\mathbf A_t^{-1}$ computed using Equation~\ref{eq:incremA}
 \Else
 \State $i_*$ computed using Equation~\ref{eq:opti} 
 \State $\delta \mathbf S_t = \delta \mathbf S_{t-1}$, $\delta \mathbf S_t[i_*] = \delta \mathbf s_t$ 
  \State $\delta \mathbf R_t = \delta \mathbf R_{t-1}^m$, $\delta \mathbf R_t[i_*] = \delta \mathbf r_t$ 
  \State $\mathbf A_t^{-1}$ computed using Equation~\ref{eq:incremA2}
 \EndIf
 \end{algorithmic}
\end{algorithm}

\subsection{Exploitation and exploration}
\label{sec:exploration}
Given the interaction function $H_t$ learned by FO-GPR, the controller system predicts the required velocity to be executed by the end-effectors based on the error between the current state and the goal state in the feature space:
\begin{align}
\label{eq:predic}
\delta \mathbf r = H_t(\eta \cdot (\mathbf s^* - \mathbf s)).
\end{align}

However, when there is no sufficient data, GPR cannot output control policy with high confidence, which typically happens in the early steps of the manipulation or when the robot manipulates the object into a new unexplored configuration. Fortunately, the GPR framework provides a natural way to trade-off exploitation and exploration by sampling the control velocity from distribution of $H_t$:
\begin{equation}
	\delta \mathbf r \sim \mathcal N(\mu_t, \sigma_t^2)
\end{equation}
If $\mathbf s_t$ is now in an unexplored region with a large $\sigma_t^2$, the controller will perform exploration around $\mu_t$; if $\mathbf s_t$ is in a well-explored region with a small $\sigma_t^2$, the controller will output a velocity close to $\mu_t$. 

A complete description of the controller based on FO-GPR is shown in Figure~\ref{fig:flow-chart}.
 
\begin{figure*}[t] 
\centering
\includegraphics[width=0.8\linewidth]{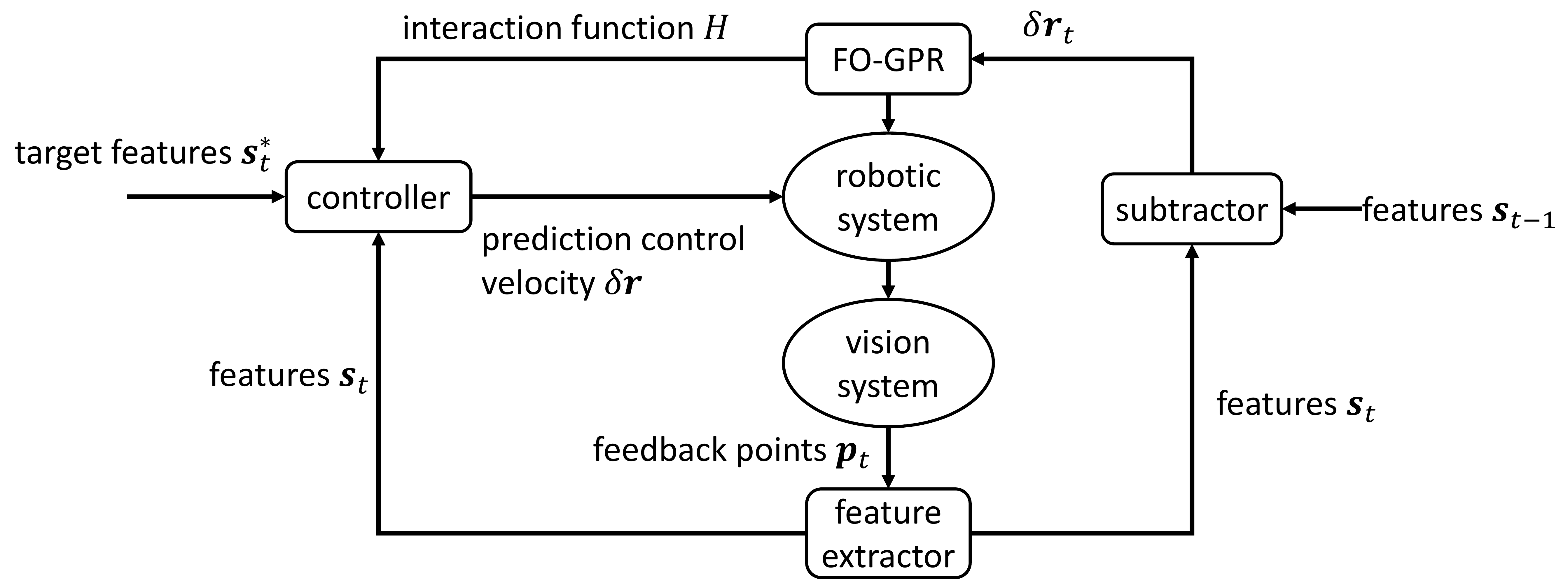}
\caption{An overview of our nonlinear DOM controller based on Gaussian Process Regression. }
\label{fig:flow-chart}
\end{figure*}

\subsection{Convergence and stability analysis}
\label{sec:convergence}
We can prove that, given more and more data, our GPR-based online learning converges to the true underlying deformation distribution and that the resulting controller is stable. 

The GPR prediction in Equation~\ref{eq:mu} and~\ref{eq:predic} includes two terms: The first term $m(\delta \mathbf s)$ actually corresponds to the adaptive Jacobian in~\cite{Alarcon:2016:ADM} which has proven to be bounded and is able to asymptotically minimize the error between the current and target states. The second term corresponds to a function $H$, which minimizes the functional
\begin{equation}
\label{lossfunc}
	J[H] = \frac{1}{2}\|H\|^2_{RHKS} + \frac{1}{2\sigma_n^2}\sum_{t=1}^N(\delta \mathbf R_t - H(\delta \mathbf s_t))^2,
\end{equation}
where $\|H\|_{RHKS}$ is the RKHS (reproducing kernel Hilbert space) norm w.r.t. the kernel $k$. According to~\cite{rasmussen2006gaussian}, we can prove that this second term converges:
\begin{prop}
The prediction function $H$ converges to a true underlying interaction function $\eta$.
\end{prop}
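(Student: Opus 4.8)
The plan is to prove convergence in two stages: first identify the probabilistic GP posterior with a deterministic regularized estimator, and then invoke consistency of that estimator as the training inputs grow dense. First I would establish, via the representer theorem, that the GP posterior mean $\mu(\delta \mathbf s)$ given in Equation~\ref{eq:mu} coincides with the unique minimizer of the regularized functional $J[H]$ in Equation~\ref{lossfunc}. This identifies the GPR prediction with the solution of a penalized least-squares problem in the RKHS $\mathcal{H}_k$ induced by the RBF kernel $k$, reducing a statement about random functions to one about a deterministic kernel-ridge estimator, which is the form in which consistency results are usually available.

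Second, I would exploit the fact that the RBF kernel is \emph{universal}: its RKHS $\mathcal{H}_k$ is dense, in the supremum norm, in the continuous functions on any compact subset $\Omega$ of the feature-velocity space. Since the true interaction function $\eta$ is continuous on $\Omega$ (it inherits smoothness from the potential energy $U$ near the equilibrium used in the Taylor expansion), this density guarantees that the approximation error of the hypothesis class vanishes, so it remains only to control the estimation error.

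Third, I would bound that estimation error. As the number of samples $N \to \infty$ and the inputs $\{\delta \mathbf s_t\}$ fill $\Omega$, the second (information-gain) term in the posterior variance $\sigma^2(\delta \mathbf s)$ of Equation~\ref{eq:sigma} approaches $k(\delta \mathbf s, \delta \mathbf s)$ whenever a training input lies arbitrarily close to the query, so $\sigma^2(\delta \mathbf s) \to 0$ pointwise. Combined with a schedule in which the noise/regularization level $\sigma_n^2$ decays slowly relative to $N$, the standard posterior-contraction results cited in \cite{rasmussen2006gaussian} (and the sharper van der Vaart--van Zanten theory) give that the posterior concentrates on the truth, so $\mu_N \to \eta$ in $L^2(\Omega)$ and uniformly on $\Omega$. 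Stability of the induced controller in Equation~\ref{eq:predic} then follows because the linear mean term $m(\delta \mathbf s) = \mathbf W \delta \mathbf s$ already reproduces the bounded, asymptotically error-minimizing adaptive-Jacobian controller of~\cite{Alarcon:2016:ADM}, while the vanishing correction term only sharpens it.

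The hardest part will be reconciling this asymptotic argument with the \emph{online, finite-memory} FO-GPR scheme of Section~\ref{sec:FO-GPR}: the selective-forgetting rule in Equation~\ref{eq:opti} caps the retained set at size $M$, so the effective training set never grows unboundedly and the literal ``data fills $\Omega$'' hypothesis fails. I would handle this by arguing that the forgetting criterion discards only inputs that are near-redundant in the kernel metric, so the retained set remains an $\epsilon$-cover of the currently relevant local region; the conclusion then becomes \emph{local} convergence, which is appropriate since $H$ is itself derived from a local force-equilibrium linearization and is only accurate in a neighborhood of the current equilibrium. A secondary subtlety is the moving target induced by drift in the deformation parameters during manipulation, so I would state the result for a locally stationary deformation regime, or phrase it as tracking convergence under slowly-varying dynamics.
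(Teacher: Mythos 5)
Your proposal is correct in substance but follows a genuinely different route from the paper's proof. You share the paper's starting point — identifying the GP posterior mean with the minimizer of the regularized functional $J[H]$ in Equation~\ref{lossfunc} (the paper asserts this directly; you justify it via the representer theorem) — but the consistency argument diverges. The paper replaces the empirical data-fit term by its expectation under the data-generating measure, defines the truth as the regression function $\eta(\delta \mathbf S) = \mathbb E[\delta \mathbf R|\delta \mathbf S]$, expands everything in the Mercer eigenbasis of the kernel, and minimizes coordinate-wise to get the closed-form shrinkage solution $H_t = \frac{\lambda_t}{\lambda_t+\sigma_n^2/N}\eta_t$ of Equation~\ref{eq:gp_solu}; convergence follows because each shrinkage factor tends to one as $\sigma_n^2/N \to 0$. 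You instead split the error into an approximation part (killed by universality of the RBF kernel, so $\eta$ need not lie in the RKHS) and an estimation part (killed by posterior-variance decay and posterior-contraction theory). The paper's spectral route buys brevity and an explicit per-eigendirection rate, but it is really a population-level computation on the expected loss and tacitly assumes $\eta$ admits an expansion compatible with the kernel; your route is heavier (it leans on external contraction theorems) but addresses the actual sampled-data setting and removes that assumption. Two remarks: first, your suggestion that $\sigma_n^2$ must follow a decay schedule is unnecessary — in both arguments $\sigma_n^2$ is fixed and the effective regularization $\sigma_n^2/N$ vanishes automatically as $N$ grows. Second, your observation that the finite-memory FO-GPR scheme of Section~\ref{sec:FO-GPR} caps the training set at $M$ points, so the $N\to\infty$ hypothesis never literally holds, identifies a real gap that the paper's proof silently ignores; your proposed weakening to local convergence on an $\epsilon$-cover of the currently relevant region is a sensible repair that goes beyond what the paper establishes.
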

\begin{proof}
Let $\mu(\delta \mathbf S,\delta \mathbf R)$ be the probability measure from which the data pairs$(\delta \mathbf s_t,\delta \mathbf S_t)$ are generated. We have
\begin{align}
	\mathbb E[\sum_{t=1}^N(\delta \mathbf R_t - H(\delta \mathbf s_t))^2] &= N\int(\delta \mathbf R - H(\delta \mathbf S))^2\notag d\mu.
\end{align}
Let $\eta(\delta \mathbf S) = \mathbb E[\delta \mathbf R|\delta \mathbf S]$ and $\lambda_i$ be the $i$-th eigenfunction of the kernel function $k$,the $J[H]$ functional then becomes
\begin{equation}
J[H] = \frac{N}{2\sigma_n^2}\sum_{t=1}^\infty(\eta_t-H_t)^2+\frac{1}{2}\sum_{t=1}^\infty\frac{H_t^2}{\lambda_t}
\end{equation}
and its solution can then be computed as
\begin{equation}
\label{eq:gp_solu}
H_t = \frac{\lambda_t}{\lambda_t+\sigma_n^2/N}\eta_t.
\end{equation}
When $n \rightarrow \infty$, $\sigma_n^2/N \rightarrow 0$ and thus $H$ converges to $\eta$. 
\end{proof}

After showing the convergence of Equations~\ref{eq:mu} and~\ref{eq:predic}, we can perform local linearization for the nonlinear GPR prediction function and then design a Lyapunov function similar to~\cite{Alarcon:2016:ADM} to show the stability of the GPR-based visual-servoing controller.

\section{Controller design: random forest controller}
\label{sec:controllerdesign2}
The GPR controller proposed above has two main limitations. First, the controller must be in the form of a mathematical function, making it unable to model the more complicated controller required for dealing with large deformations. Second, the feature extraction step and control design step are independent.

Here, we present a another controller for DOM tasks. This controller uses random forest~\cite{quinlan:1986:induction} to model the mapping between the visual features of the object and an optimal control action of the manipulator. The topological structure of this random-forest-based controller is determined automatically based on the training data, which consists of visual features and control actions. 
This enables us to integrate the overall process of training data classification and controller optimization into an imitation learning (IL) algorithm. 
Our approach enables joint feature extraction and controller optimization for DOM tasks.

The pipeline of our approach is illustrated in Figure~\ref{fig:Forest:framework} and it consists of two components. For preprocessing (the blue block), a dataset of object observations is labeled and features are extracted for each observation. This dataset is used to train our controller. During runtime (the red block), our algorithm takes the current $\mathcal{O}(c)$ as input and generates the optimal action $\mathbf r^* = \pi(\mathcal{O}(c)|\beta)$, where $\pi$ is the control policy and $\beta$ is the controller's learnable parameter.

\begin{figure}[t]
  \centering
  \includegraphics[width=\linewidth]{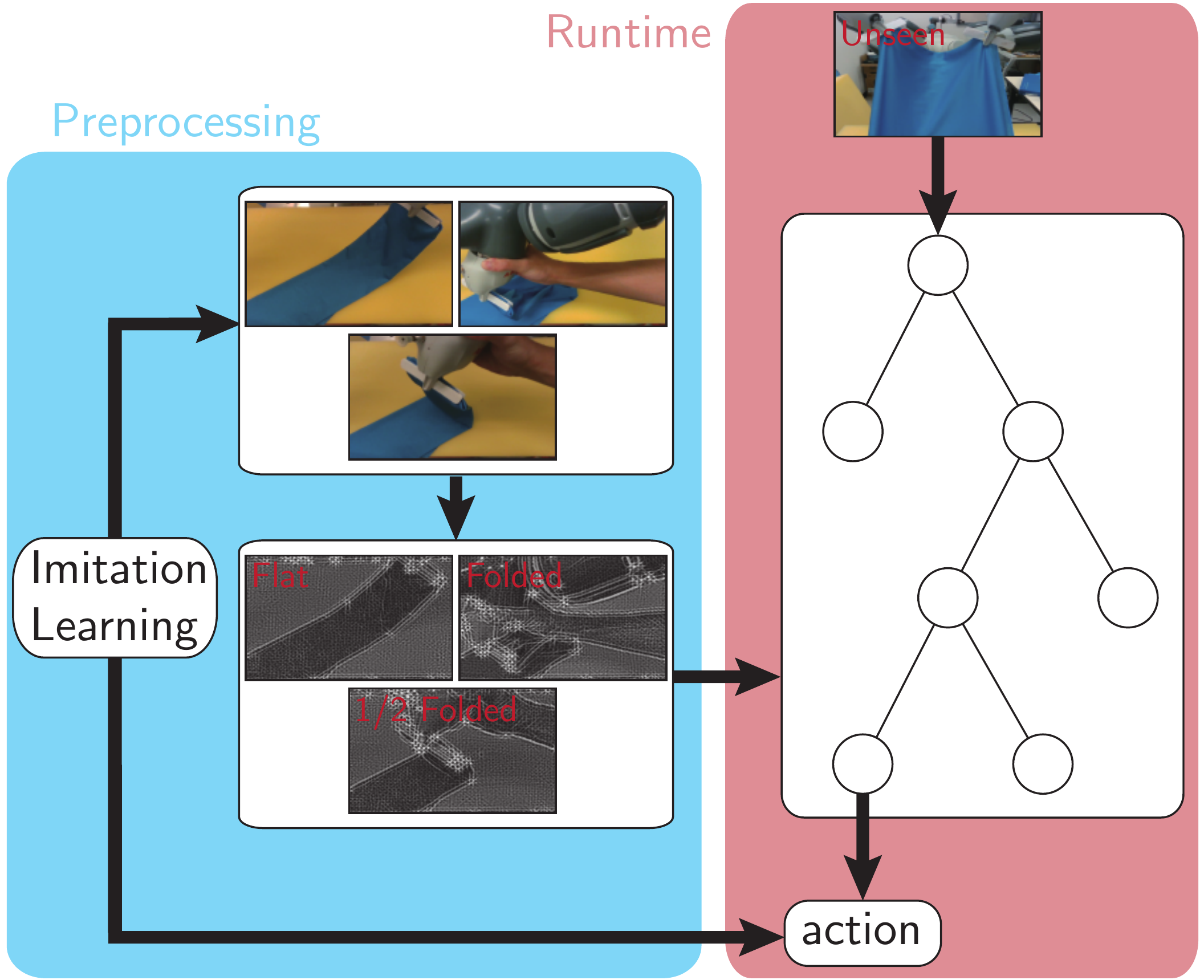}
  \put(-130,110){(a)}
  \put(-210,23){HOW Feature/Label (b)}
  \put(-46,145){\parbox{.7in}{Random Forest (c)}}
  \put(-30,185){(d)}
  \put(-50,8){(e)}
  \put(-225,55){(f)}
  \caption{
  The pipeline of learning a random forest-based DOM controller that maps the visual feature (RGB-D image) to the control action. Given a sampled dataset (a), we first label each data point (shown as red text in (b)) to get a labeled dataset, (b). We then construct a random forest to classify the images, (c).
After training, the random forest is used as a controller. Given an unseen visual observation (d), the observation is brought through the random forest to a set of leaf-nodes. The optimal control actions are defined on these leaf-nodes, (e).  
The entire process of labeling, classification, and controller optimization can be integrated into an imitation learning algorithm, (f).}
  \label{fig:Forest:framework}
\end{figure}

\subsection{Random forest-based controller formulation}
Our key contribution is a novel parametrization of $\pi$ with the use of a random forest~\cite{quinlan:1986:induction}. 
A random forest is an ensemble of $K$ decision trees, where the $k$-th tree classifies $\mathcal{O}(c)$ by bringing it to a leaf-node $l_k(\mathcal{O}(c))$, where $1\leq l_k(\mathcal{O}(c))\leq L_k$ and $L_k$ is the number of leaf-nodes in the $k$-th decision tree. 
The random forest makes its decision by bringing $\mathcal O(c)$ through every tree and taking the average.
In order to use an already constructed random forest as a controller, we define an optimal control action
 $r_{l,k}^*$ so that the final action is determined by averaging:
\begin{align}
\label{eq:ctrl_rf}
\mathbf r^* = \pi(\mathcal{O}(c)|\alpha,\beta) = \frac{1}{K} \mathbf r_{l_k(\mathcal{O}(c)),k}^*,
\end{align}
where we introduce an additional parameter $\alpha$ that encodes the random forest's topology. 
As a result, the number of optimizable controller parameters is $|\theta|=|\mathbf r^*|\sum_{k=1}^K L_k$. 
Such an extension to a random forest has two benefits.
First, even if two decision trees give the same classification for an observation $\mathcal{O}(c)$, i.e., the class label of $l_a(\mathcal{O}(c))$ equals that of $l_b(\mathcal{O}(c))$, we still assign separate optimal control actions, $\mathbf r_{l_a,a}$ and $\mathbf r_{l_b,b}$, for optimization. This makes the controller more robust to bad predictions. 
Second, the number of optimized parameters is related to the random forest's topology. This reveals strong connection between feature extraction and controller parametrization. 
By automatically segmenting the state space into pieces (leaf-nodes) where different control actions need to be taken, fewer design decisions on the controller's side are exposed to the end user. This makes our method less sensitive to parameters.

\subsection{Controller optimization problem}
Our controller training problem can take different forms depending on the available information about $c^*$. If $\mathcal{O}(c^*)$ is known, then we can define a reward function: $R(c)=\mathbf{dist}(\mathcal{O}(c),\mathcal{O}(c^*))$, where $\mathbf{dist}$ can be any distance measure between RGB-D images. In this setting, we want to solve the following reinforcement learning (RL) problem:
\begin{align}
\argmax_{\alpha,\beta} \mathbf{E}_{\tau\sim\pi}{\left[\sum_i^\infty\gamma^iR(c_i)\right]},
\end{align}
where $\tau=(c_1,c_2,\cdots,c_\infty)$ is a trajectory sampled according to $\pi$ and $\gamma$ is the discount factor. Another widely used setting assumes that $\mathcal{O}(c^*)$ is unknown, but that an expert is available to provide optimal control action $\pi^*(\mathcal{O}(c))$. In this case, we want to solve the following IL problem:
\begin{align}
\argmax_{\alpha,\beta} \mathbf{E}_{\tau\sim\pi}{\left[\sum_i^\infty\gamma^i\mathbf{dist}(\pi^*(\mathcal{O}(c_i)),\pi(\mathcal{O}(c_i)))\right]}.
\end{align}
Our method is designed for this IL problem and we assume that $\mathcal{O}(c^*)$ is known even in an IL setting. This is because $\mathcal{O}(c^*)$ is needed to construct the random forest. 
This assumption always holds when we are training in a simulated environment. 
We describe the method to perform these optimizations in Section~\ref{sec:technique}.

\subsection{Learning random forest-based controller}\label{sec:technique}
To find the controller parameters, we use an IL algorithm~\cite{Ross:2011:RIL}, which can be decomposed into two substeps: dataset sampling and controller optimization. The first step samples a dataset $\mathcal{D}=\{\langle \mathcal{O}(c),\mathbf r^*\rangle \}$, where each sample is a combination of cloth observation and optimal action. Our goal is to optimize the random forest-based controller with respect to $\alpha,\beta$, given $\mathcal{D}$. 

\subsubsection{Random forest construction}
\label{sec:app1}

We first solve for $\alpha$, the topology of the random forest. 
Unfortunately, a random forest construction requires a labeled dataset that classifies the samples into discrete sets of classes. 
To generate these labels, we run a mean-shift clustering computation on the optimal actions, $\mathbf r^*$. In addition, we reduce the dimension of the RGB-D image by extracting the HOW-feature (Section~\ref{sec:featuredesign2}).

After feature mapping, we get a modified dataset $\bar{\mathcal{D}}=\{\langle \mathcal{F}(\mathcal{O}(c)),\mathcal{L}(\mathbf r^*) \rangle\}$, where $\mathcal{F}$ is the feature extractor such that the extracted feature $\mathbf s = \mathcal{F}(\mathcal{O}(c))$ and $\mathcal{L}$ is the mean-shift labeling.

To construct the random forest, we use a strategy that is similar to~\cite{Shotton:2011:RHP}. We construct $K$ binary decision trees in a top-down manner, each using a random subset of $\bar{\mathcal{D}}$. Specifically, for each node of a tree, a set of random partitions is computed and the one with the maximal Shannon information gain~\cite{quinlan:1986:induction} is adopted. Each tree is grown until a maximum depth is reached or the best Shannon information gain is lower than a threshold.

\subsubsection{Controller optimization}
After constructing the random forest, we define optimal control actions on each leaf-node of each decision tree. The optimal control action can be derived according to Equation~\ref{eq:ctrl_rf}. 
We can then optimize for $\beta$, i.e., all the $\mathbf r_{l,k}^*$, by solving the following optimization problem:
\begin{align}
\argmin_{\mathbf r_{l,k}^*} \sum_{\langle \mathcal{O}(c),\mathbf r^* \rangle}
\|\mathbf r^*-\frac{1}{K} \mathbf r_{l_k(\mathcal{O}(c)),k}^*\|^2.
\end{align}
This is a quadratic energy function that can be solved in closed form. However, a drawback of this energy is that every decision tree contributes equally to the final optimal action, making it less robust to outliers. To resolve this problem, we introduce a more robust model by exploiting sparsity. We assume that each leaf-node has a confidence value denoted by $\gamma_{l,k}\in[0.01,1]$. The final optimal action is found by weighted averaging:
\begin{align}
\mathbf r^* = \pi(\mathcal{O}(c)|\alpha,\beta) = \frac{\sum\gamma_{l_k(\mathcal{O}(c)),k} \mathbf r_{l_k(\mathcal{O}(c)),k}^*}{\sum\gamma_{l_k(\mathcal{O}(c)),k}},
\end{align}
where a lower bound $(0.01)$ on $\gamma$ is used to avoid division-by-zero. 
Our final controller optimization takes the following form:
\begin{align}
\label{eq:opt_sparse}
&\argmin_{\mathbf r_{l,k}^*,\gamma_{l,k}} \sum_{\langle \mathcal{O}(c),\mathbf r^* \rangle}
\|\mathbf r^*-\frac{\sum\gamma_{l_k(\mathcal{O}(c)),k} \mathbf r_{l_k(\mathcal{O}(c)),k}^*}{\sum\gamma_{l_k(\mathcal{O}(c)),k}}\|^2+\theta\|\gamma\|_1	\nonumber\\
&\mathbf{s.t.} \ \gamma_{l,k}\in[0.01,1],
\end{align}
which can be solved using an L-BFGS-B optimizer \cite{Zhu:1997:ALF}. 
Finally, both the random forest construction and the controller optimization are integrated into the IL algorithm as outlined in Algorithm~\ref{Alg:mainAlg}.

\setlength{\textfloatsep}{5pt}
\begin{algorithm}[h]
\caption{\label{Alg:mainAlg} Training DOM-controller using imitation learning algorithm.}
\begin{algorithmic}[1]
\renewcommand{\algorithmicrequire}{\textbf{Input:}}
\renewcommand{\algorithmicensure}{\textbf{Output:}}
\Require Initial guess of $\alpha,\beta$, optimal policy $\pi^*$
\Ensure Optimized $\alpha,\beta$
\label{ln:outer}
\LineComment{IL outer loop}
\While{IL has not converged}
\LineComment{Generate training data based on current controller}
\State Sample $\mathcal{D}$ by querying $\pi^*$ as in~\cite{Ross:2011:RIL}
\LineComment{Label each data sample by mean-shift}
\State Run mean-shift on $\{\mathbf r^*\}$ to get $\mathcal{L}$
\LineComment{Extract HOW feature for each data sample}
\For{each $\mathcal{O}(c)$}
\State Extract HOW feature $\mathcal{F}(\mathcal{O}(c))$ as in Section~\ref{sec:featuredesign}
\EndFor
\State Define $\bar{\mathcal{D}}=\{\langle \mathcal{F}(\mathcal{O}(c)),\mathcal{L}(\mathbf r^*) \rangle\}$
\LineComment{Construct random forest, i.e., $\alpha$}
\For{$1\leq k\leq K$}
\State Sample random subset of $\bar{\mathcal{D}}$
\label{ln:rs}
\State Construct $k$-th binary decision tree using~\cite{Shotton:2011:RHP}
\EndFor
\LineComment{Learn a new controller, i.e., determine $\beta$}
\State Solve Equation~\ref{eq:opt_sparse} using L-BFGS-B~\cite{Zhu:1997:ALF}
\EndWhile
\end{algorithmic}
\end{algorithm}

\subsubsection{Analysis}
In typical DOM applications, data are collected using numerical simulations. 
Unfortunately, the high dimensionality of $c$ induces a high computational cost of simulations and generating a large dataset can be quite difficult. 
Therefore, we design our method so that it can be used with small datasets. Our method's performance relies on two parameters: the random forest's stopping criterion (i.e., the threshold of gain in Shannon entropy) and the L1-sparsity of leaf-node confidences, $\theta$. 
We choose to use a large Shannon entropy threshold so that the random forest construction stops early, leaving us with a relatively small number of leaf-nodes, so that $|\beta|$ is small.
We also fix $\theta$ as a very small constant. 

We expect that, with a large enough number of IL iterations, both the number of nodes in each decision tree of the random forest ($\alpha$) and the optimal actions on the leaf-nodes ($\beta$) will converge.
Indeed, such convergence can be guaranteed by the following Lemma. 

\begin{lemma}
\textit{When the number of IL iterations $N\to\infty$, the distribution incurred by the random forest-based controller will converge to a stationary distribution and the expected classification error of the random forest will converge to zero.}
\end{lemma}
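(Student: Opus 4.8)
The plan is to recognize Algorithm~\ref{Alg:mainAlg} as an instance of the DAgger reduction of \cite{Ross:2011:RIL} and to instantiate its no-regret guarantee for the online learner defined by our random forest construction together with the leaf optimization of Equation~\ref{eq:opt_sparse}. Let $\pi_i$ be the controller produced after the $i$-th outer iteration, and let $d_{\pi_i}$ denote the occupancy distribution over observations $\mathcal{O}(c)$ induced by rolling out $\pi_i$. Write the per-iteration classification loss as $\ell_i(\pi)=\mathbb{E}_{\mathcal{O}(c)\sim d_{\pi_i}}[\mathbf{dist}(\pi^*(\mathcal{O}(c)),\pi(\mathcal{O}(c)))]$, which is exactly the surrogate that Algorithm~\ref{Alg:mainAlg} minimizes on the aggregated dataset. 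The two assertions of the lemma then correspond to the stabilization of $d_{\pi_i}$ and to the vanishing of the running average of $\ell_i(\pi_i)$, and I would prove the error claim first, since the distributional claim follows from it.

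For the classification error, observe that DAgger refits each $\pi_i$ on the union of all data sampled so far, which is precisely the Follow-The-Leader strategy over the loss stream $\{\ell_i\}$. For a fixed forest topology $\alpha$ and fixed confidences $\gamma$, the training objective of Equation~\ref{eq:opt_sparse} is convex in the leaf actions $\mathbf r^*_{l,k}$ (a sum of squared residuals plus an $L_1$ penalty, with the weighted average linear in the actions), so the learner is no-regret with average regret $\gamma_N\to 0$. Invoking the main reduction theorem of \cite{Ross:2011:RIL}, the best policy among $\pi_{1:N}$ satisfies $\mathbb{E}_{\mathcal{O}(c)\sim d_{\pi_i}}[\ell]\le\epsilon_N+\gamma_N$, where $\epsilon_N=\min_\pi\frac1N\sum_i\ell_i(\pi)$ is the loss of the best controller in hindsight. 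Under realizability---the forest class contains a controller reproducing the mean-shift labels of the expert on the reachable states, which holds once the trees separate the finitely many action clusters---we have $\epsilon_N\to 0$, and hence the expected classification error converges to zero.

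For the distributional claim, note that once the classification error of $\pi_i$ under $d_{\pi_i}$ tends to zero, $\pi_i$ agrees with the deterministic expert $\pi^*$ on almost every state it visits, so the transition kernel of the Markov chain induced by $\pi_i$ converges to that induced by $\pi^*$. Assuming this chain is ergodic on the reachable observation space---which is reasonable given the quasi-static dynamics and the continuity of the interaction function $H$---it admits a unique stationary distribution. I would then transfer the kernel perturbation to the invariant measures through a Dobrushin-coefficient (total-variation contraction) bound, concluding that $d_{\pi_i}$ converges to the stationary distribution $d_{\pi^*}$.

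The hard part is the no-regret step, not the Markov-chain step. The forest topology $\alpha$ is re-estimated at every iteration, so the hypothesis class changes between rounds and the objective is only piecewise-convex in the joint parameter $(\alpha,\beta)$; the clean online-convex-optimization bound applies only conditioned on a fixed topology. The cleanest remedy is to exploit the stopping rule of Section~\ref{sec:technique}: because the trees are grown only until a large Shannon-gain threshold is reached, the leaf count---and hence $\alpha$---stabilizes after finitely many iterations once the aggregate dataset saturates the reachable state space, after which the analysis reduces to the fixed-topology convex case. Making this ``saturation in finite time'' precise, and excluding persistent topology oscillation, is where most of the work lies; the remaining pieces are direct applications of the reduction in \cite{Ross:2011:RIL} together with standard ergodicity arguments.
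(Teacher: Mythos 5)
There is a genuine gap, and it sits exactly where you say ``most of the work lies'': your entire argument is routed through a no-regret (Follow-The-Leader) step that requires a \emph{fixed} hypothesis class, but Algorithm~\ref{Alg:mainAlg} rebuilds the forest topology $\alpha$ at every IL iteration, on a random subset of an aggregated dataset whose distribution is itself still changing. Your proposed remedy---that the Shannon-gain stopping rule forces $\alpha$ to stabilize after finitely many iterations---is asserted, not proved, and nothing in the construction excludes the leaf count from growing or oscillating indefinitely. Worse, that remedy is in direct tension with your realizability claim $\epsilon_N\to 0$: the expert policy is a continuous function of the observation, while any fixed-topology forest controller is piecewise constant (a weighted average of finitely many leaf actions $\mathbf r^*_{l,k}$), so driving the best-in-class loss to zero requires the forest to refine \emph{without bound}---precisely what the stabilization argument forbids. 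Since your distributional claim (the Dobrushin/ergodicity step) is derived \emph{from} the vanishing error, the gap propagates to both conclusions of the lemma; the ergodicity and unique-stationarity assumptions you need for the Markov-chain step are additional unproved hypotheses on top of that.

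The paper's proof avoids this trap by using the opposite decomposition with different tools. First, the convergence of the incurred distribution is taken directly from Lemma 4.1 of \cite{Ross:2011:RIL}: it is a property of the DAgger data-collection scheme itself and does not depend on how well the learner performs, so no error bound is needed as input. Second, once the distribution is (asymptotically) stationary and the aggregated training set grows to infinity, the expected classification error of the random forest vanishes by the consistency result (Theorem 5) of \cite{Biau:2012:ARF}---a theorem designed precisely for forests that are re-grown and refined as data accumulates, which is why no fixed-class no-regret argument, no realizability assumption, and no Markov-chain perturbation machinery are required. To salvage your route you would have to either prove finite-time topology stabilization (and then accept a nonvanishing approximation error $\epsilon_N$, weakening the conclusion) or replace the FTL step with a forest-consistency argument of the Biau type---which essentially reduces your proof to the paper's.
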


\textit{proof:} Assuming that Algorithm~\ref{Alg:mainAlg} generates a controller $\pi^n$ at the $n$-th iteration, Lemma 4.1 of \cite{Ross:2011:RIL} showed that $\pi^n$ incurs a distribution that converges when $n\to\infty$. Obviously, the number of data samples used to train the random forest also increases to $\infty$ with $n\to\infty$. The expected error of a random forest's classification on a stationary distribution converges to zero according to Theorem 5 of \cite{Biau:2012:ARF}. 

In Section~\ref{sec:experiment:forest}, we will further show that, empirically, the number of leaf-nodes in the random forest also converges.
\section{Experiments and Results}
\label{sec:experiment}

We evaluate our proposed DOM framework using one dual-arm robot (ABB Yumi) with 7 degrees-of-freedom in each arm. To get precise information of the 3D object to be manipulated, we set up a vision system including two 3D cameras with different perception fields and precision: one realsense SR300 camera for small objects and one ZR300 camera for large objects. We compute the Cartesian coordinates of the end-effectors of the ABB YuMi as the controlling configuration $r \in \mathbb{R}^6 $ and use an inverse kinematics-based motion planner~\cite{beeson:2015:trac} directly.The entire manipulation system is shown in Figure~\ref{fig:manipulation-system}.

\subsection{Deformable object manipulation using HOW features}

To evaluate the effectiveness of combining the HOW feature with the basic visual seroving manipulation controller, we use $6$ benchmarks (as shown in Figure~\ref{fig:HOW:benchmarks}) with different clothes with different material characteristics and shapes. Moreover, we use different initial and goal states depending on the task, e.g. stretching or folding. 
The details are listed in Table~\ref{tab:benchmarks}.
In these tasks, we use three different forms of goal configurations for the deformable object manipulations, as discussed in Section~\ref{sec:goal}.

For benchmarks 1-3, the task is to manipulate the cloth with human participation. The human is assisted with the task, but the task also introduces external perturbations.
Our approach makes no assumptions about the human motion, and only uses the visual feedback to guess the human's behavior.
In benchmark 1, the robot must anticipate the human's pace and forces for grasping to fold the towel in the air.
In benchmark 2, the robot needs to process a complex task with several goal configurations when performing a folding task.
In benchmark 3, the robot is asked to follow the human's actions to maintain the shape of the sheet. 
For benchmarks 4-6, the task corresponds to manipulating the cloth without human participation and we specify the goal configurations. 
All 6 benchmarks are defined with goal states/features of the cloth, regardless of whether there is a human moving the cloth or not. Because different states of the cloth can be precisely represented and the corresponding controlling parameters can be computed, the robot can perform complicated tasks as well.

\begin{figure*}[t]
  \centering
  \includegraphics[width=1\textwidth]{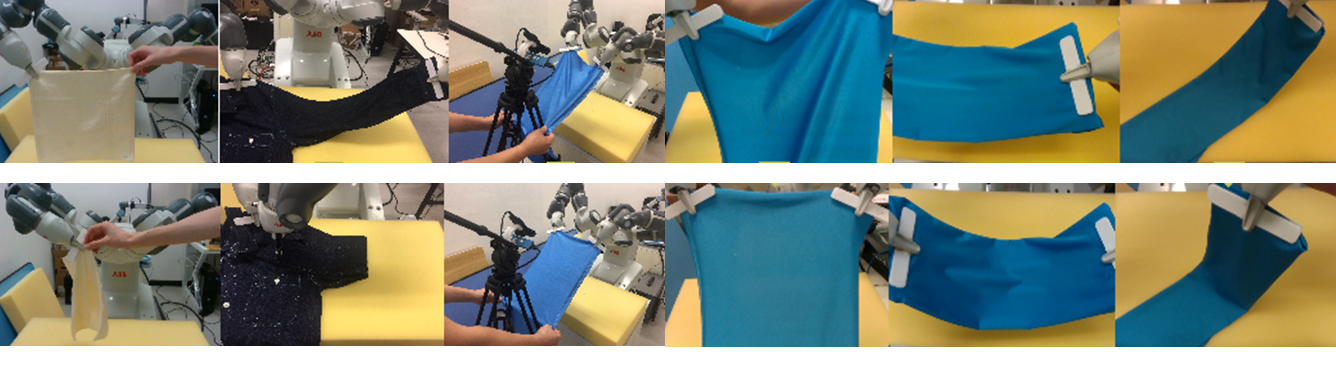}
 \put(-470,0){(1)}
\put(-385,0){(2)}
\put(-300,0){(3)}
\put(-215,0){(4)}
\put(-130,0){(5)}
\put(-50,0){(6)}
  \caption{
  We highlight the performance of using HOW features in DOM tasks on six benchmarks: (1) human-robot
jointly folding a cloth with one hand each, (2) robot folding a cloth with two hands, (3) human-robot stretching a cloth with four
combined hands, (4) flattening, (5) placement, (6) folding. The top row shows the initial state for each task
and the bottom row is the final state. Our approach can handle the perturbations due to human movements.}
  \label{fig:HOW:benchmarks}
\end{figure*}

\begin{table*}[t]
\centering
\begin{tabular}
{llll }
    \hline
     Benchmark\# & Object  & Initial State & Task/Goal \\ \hline
     1 & towel & unfold in the air & fold with human\\ \hline
     2 & shirt & shape (set by human) & fixed shape \\ \hline
     3 & unstretchable cloth & position (set by human) & fixed shape \\ \hline
     4 & stretchable cloth & random  & flattening \\ \hline 
    5 & stretchable cloth & random   & placement \\ \hline
    6 & stretchable cloth & unfolded shape on desk & folded shape \\ \hline
  \end{tabular}
\caption{
	Benchmark tasks for accomplishing DOM tasks using HOW feature as the feedback:
	We highlight various complex manipulation tasks performed using our algorithm. Three of them involve human-robot collaboration and we demonstrate that our method can handle external forces and perturbations applied to the cloth. We use cloth benchmarks of different material characteristics. The initial state is a random configuration or an unfolded cloth on a table, and we specify the final configuration for the task. The benchmark numbers correspond to the numbers shown in Figure~\ref{fig:HOW:benchmarks}.}
\label{tab:benchmarks}
\end{table*}

\subsubsection{Benefits of HOW features}
There are many standard techniques in computer vision and image processing to compute low-dimensional features of deformable models from RGB data. These include standard HOG and color histograms. We evaluate the performance of HOW-features along with the others and explore the combination of these features.
The test involves measuring the success rate of the manipulator in moving towards the goal configuration based on the computed velocity, as given by Equation~\ref{eq:basicfb}.  
We obtain best results in our benchmarks using HOG+HOW features. The HOG features capture the edges in the image and the HOW features capture the wrinkles and deformation, making their benefits complementary. 
For benchmarks 1 and 2, the shapes of the objects change significantly and HOW can easily capture the deformation by enhancing the edges. 
For benchmarks 3, 4, and 5, HOW can capture the deformation by the shadowed areas of wrinkles.
For benchmark 6, the total shadowed area continuously changes during the process, and the color histogram describes the feature slightly better.   

\subsubsection{Benefits of sparse representation}
The main parameter related to the visual feedback dictionary that governs the performance is its size. At runtime, it is also related to the choice of the slack variable in the sparse representation.
As the size of the visual feedback dictionary grows, the velocity error tends to reduce. However, after reaching a certain size, the dictionary contributes less to the control policy mapping. That implies that there is redundancy in the visual feedback dictionary. 

The performance of a sparse representation computation at runtime is governed by the slack variable $\alpha$ in Equations~\ref{alpha1} and~\ref{alpha2}. This parameter provides a tradeoff between data fitting and a sparse solution and governs the velocity error $\|\delta \mathbf r- \delta \mathbf r^*\|_2$ between the desired velocity $\delta \mathbf r^*$ and the actual velocity $\delta \mathbf r$. In practice, $\alpha$ affects the convergence speed. If $\alpha$ is small, the sparse computation has little or no impact and the solution tends to a common linear regression. If $\alpha$ is large, then we suffer from over-fitting.

\begin{figure}[!htp!]
  \centering
  \includegraphics[width=0.4\textwidth]{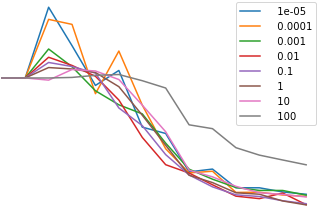}
  \caption{
  Parameter selection for visual feedback dictionary and sparse representation: We vary the dictionary size on the $x$-axis and compute the velocity error for different values of $\alpha$ chosen for sparse representation for benchmark 4.
  }
  \label{fig:HOW:parameters}
\end{figure}

\begin{table*}[!htp!]
\centering
  \begin{tabular}{lccc}
    \hline
    Feature & Benchmark 4 & Benchmark 5  & Benchmark 6  \\  \hline
    HOG & 71.44\%  & 67.31\%  & 82.62\%  \\ \hline 
    Color Histograms & 62.87\% &  53.67\%  & \bf 97.04\%    \\ \hline
    HOW & 92.21\%  &  71.97\% & 85.57\% \\ \hline
    HOW+HOG & \bf 94.53\%  & \bf 84.08\%  & 95.08\%  \\ \hline
  \end{tabular}
  
\caption{Comparison between deformable object features:
We evaluated the success rate of the manipulator based on different features in terms of reaching the goal configuration based on the velocity computed using those features. For each experiment, the number of goals equals the number of frames. There are 393, 204 and 330 frames in benchmarks 4, 5, and 6, respectively. Overall, we obtain the best performance by using HOG + HOW features. 
}
\label{tab:comparison}
\end{table*}

\subsection{Nonlinear DOM controller based on GPR}
\label{sec:experiment:GPR}

For FO-GPR parameters, we set the observation noise $\sigma_n = 0.001$, the RBF spread width $\sigma_{RBF} = 0.6$, the maximum size of the Gram matrix $M = 300$ and $m(\delta \mathbf s) = \mathbf 0$ when comparing with a Jacobian-based method. The execution rate of our approach is $30$ FPS. 

\begin{figure*}[!htp!]
\centering
\begin{subfigure}{0.15\textwidth}
\includegraphics[width=1.0\linewidth]{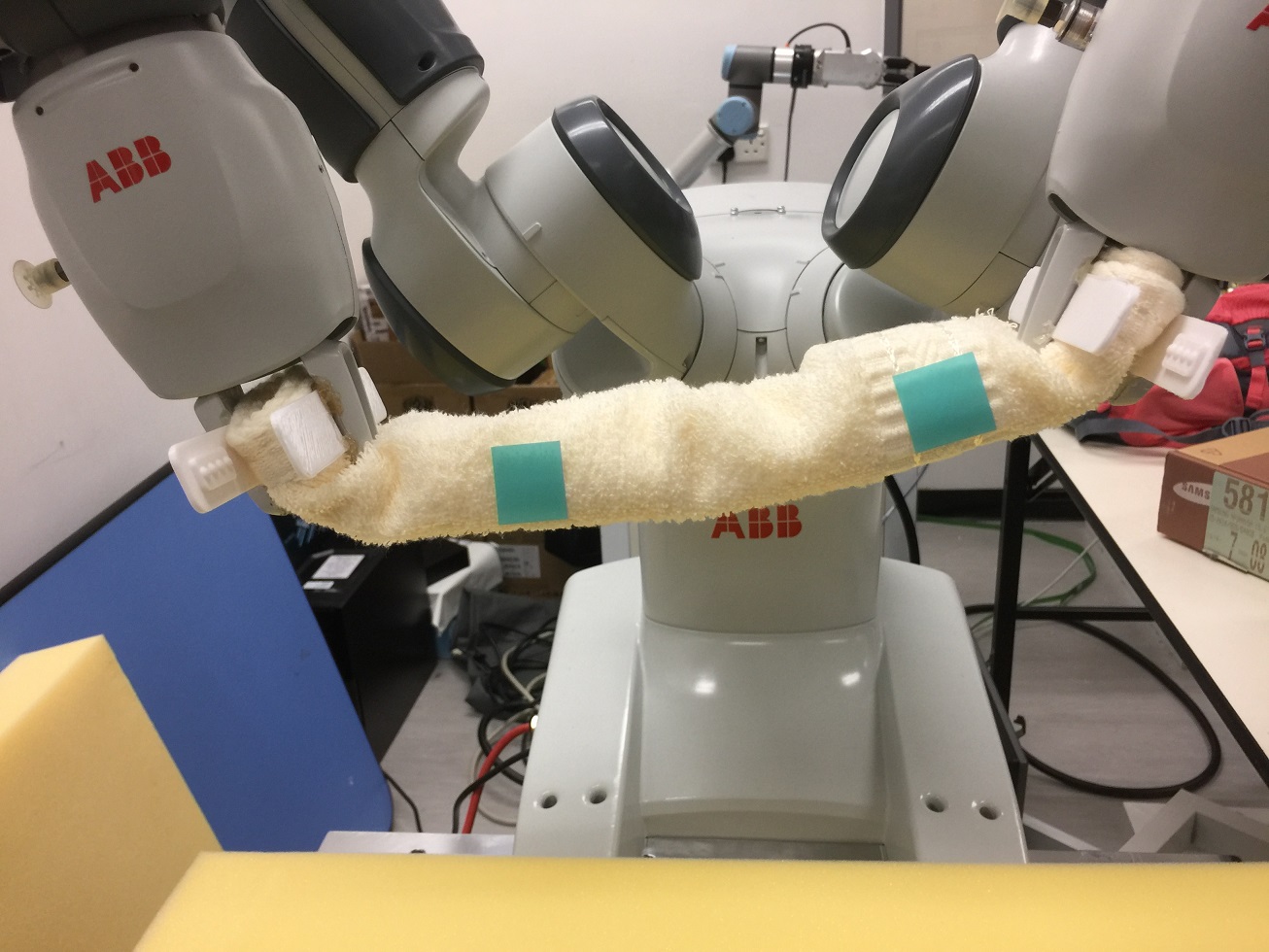}
\hspace{1in}
\end{subfigure}
\begin{subfigure}{0.15\textwidth}
\includegraphics[width=1.0\linewidth]{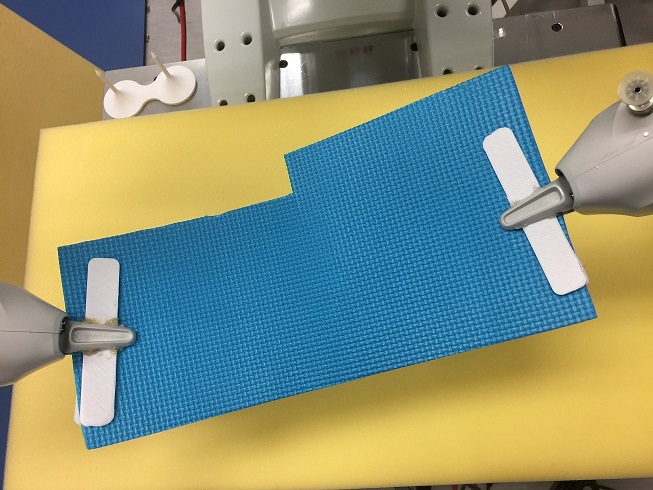}
\hspace{1in}
\end{subfigure}
\begin{subfigure}{0.15\textwidth}
\includegraphics[width=1.0\linewidth]{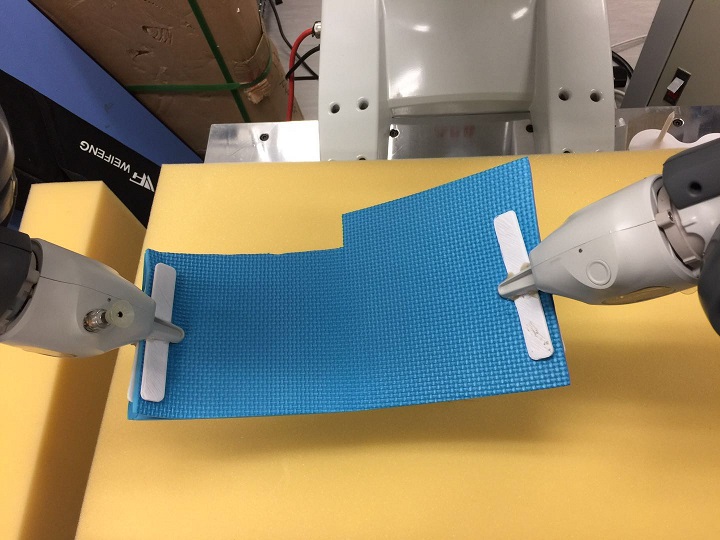}
\hspace{1in}
\end{subfigure}
\begin{subfigure}{0.15\textwidth}
\includegraphics[width=1.0\linewidth]{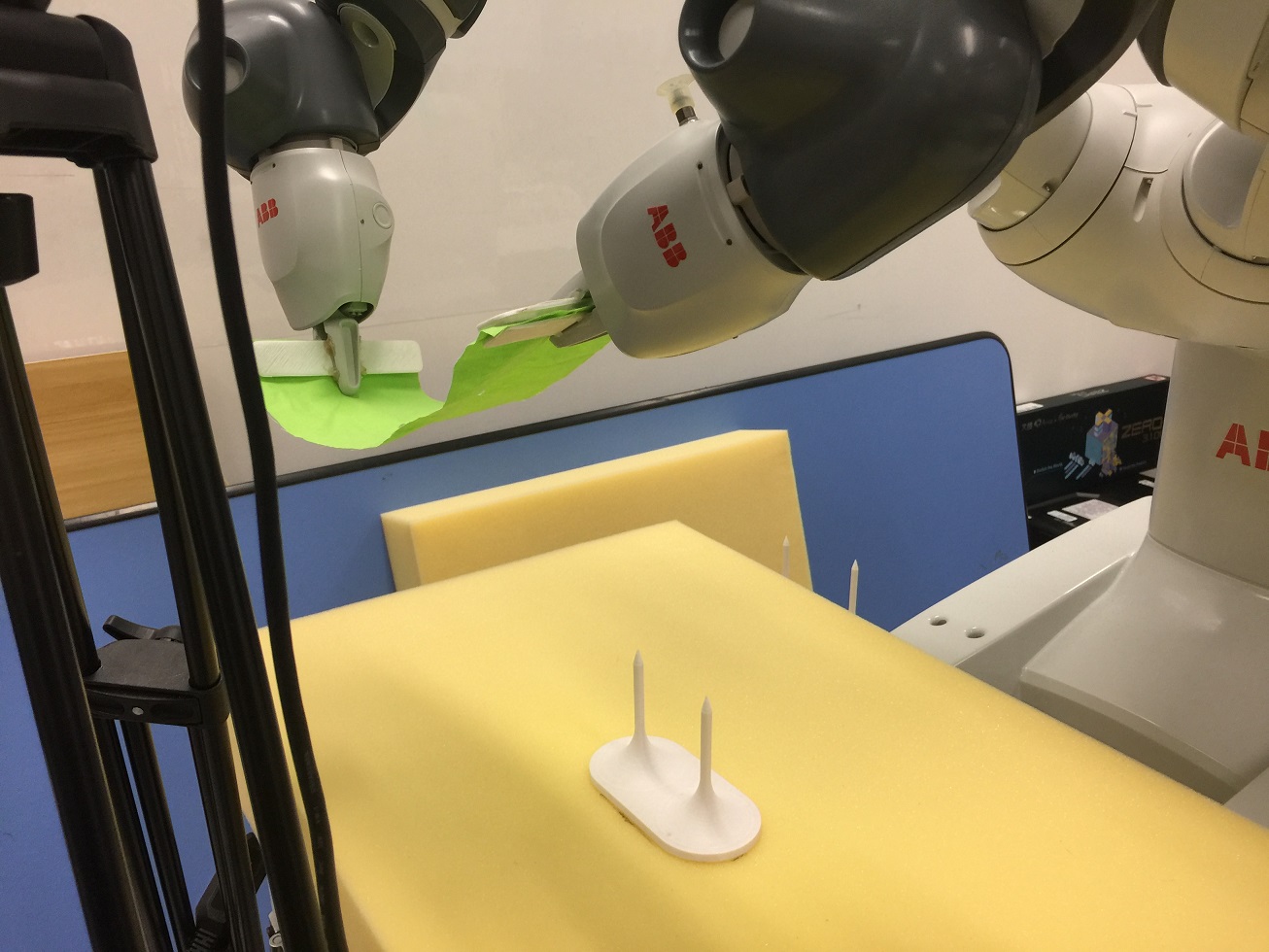}
\hspace{1in}
\end{subfigure}
\begin{subfigure}{0.15\textwidth}
\includegraphics[width=1.0\linewidth]{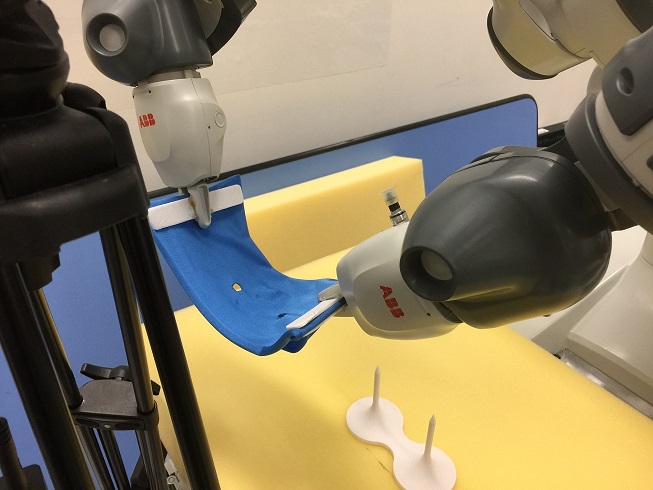}
\hspace{1in}
\end{subfigure}
\begin{subfigure}{0.15\textwidth}
\includegraphics[width=1.0\linewidth]{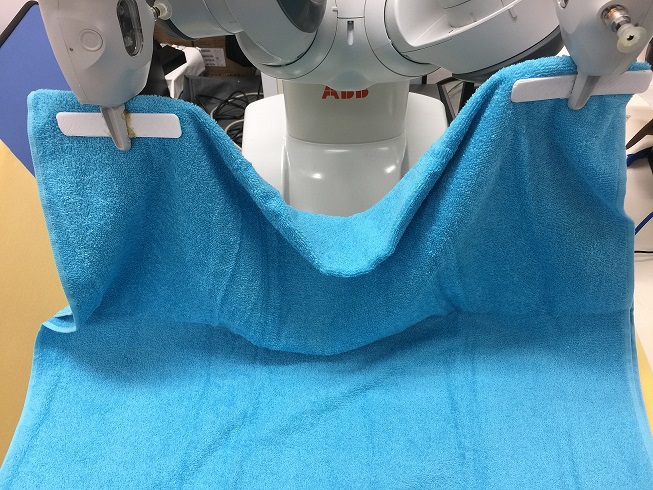}
\hspace{1in}
\end{subfigure}
\begin{subfigure}{0.15\textwidth}
\includegraphics[width=1.0\linewidth]{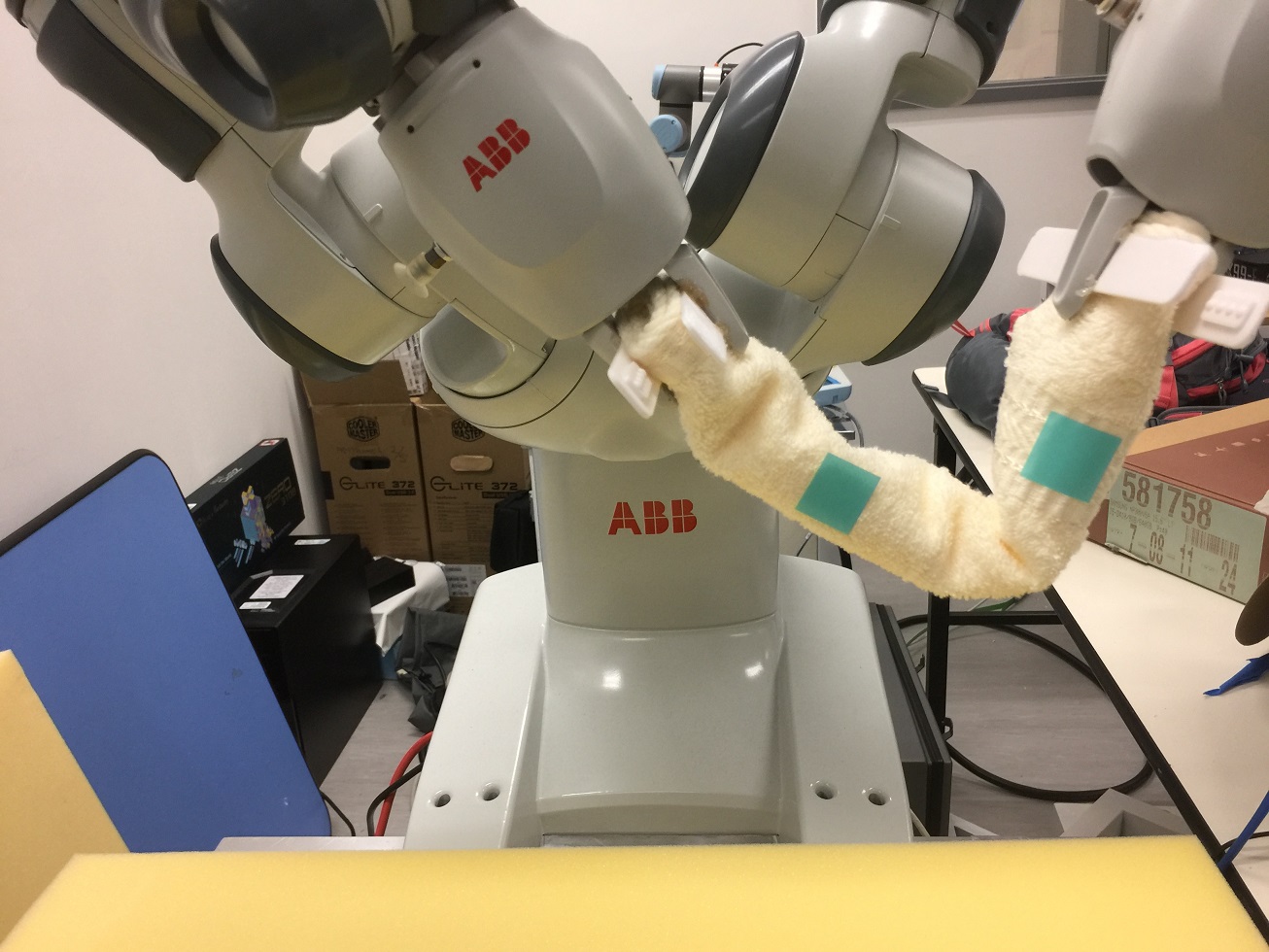}
\caption{}
\label{fig:rolled_towel}
\end{subfigure}
\begin{subfigure}{0.15\textwidth}
\includegraphics[width=1.0\linewidth]{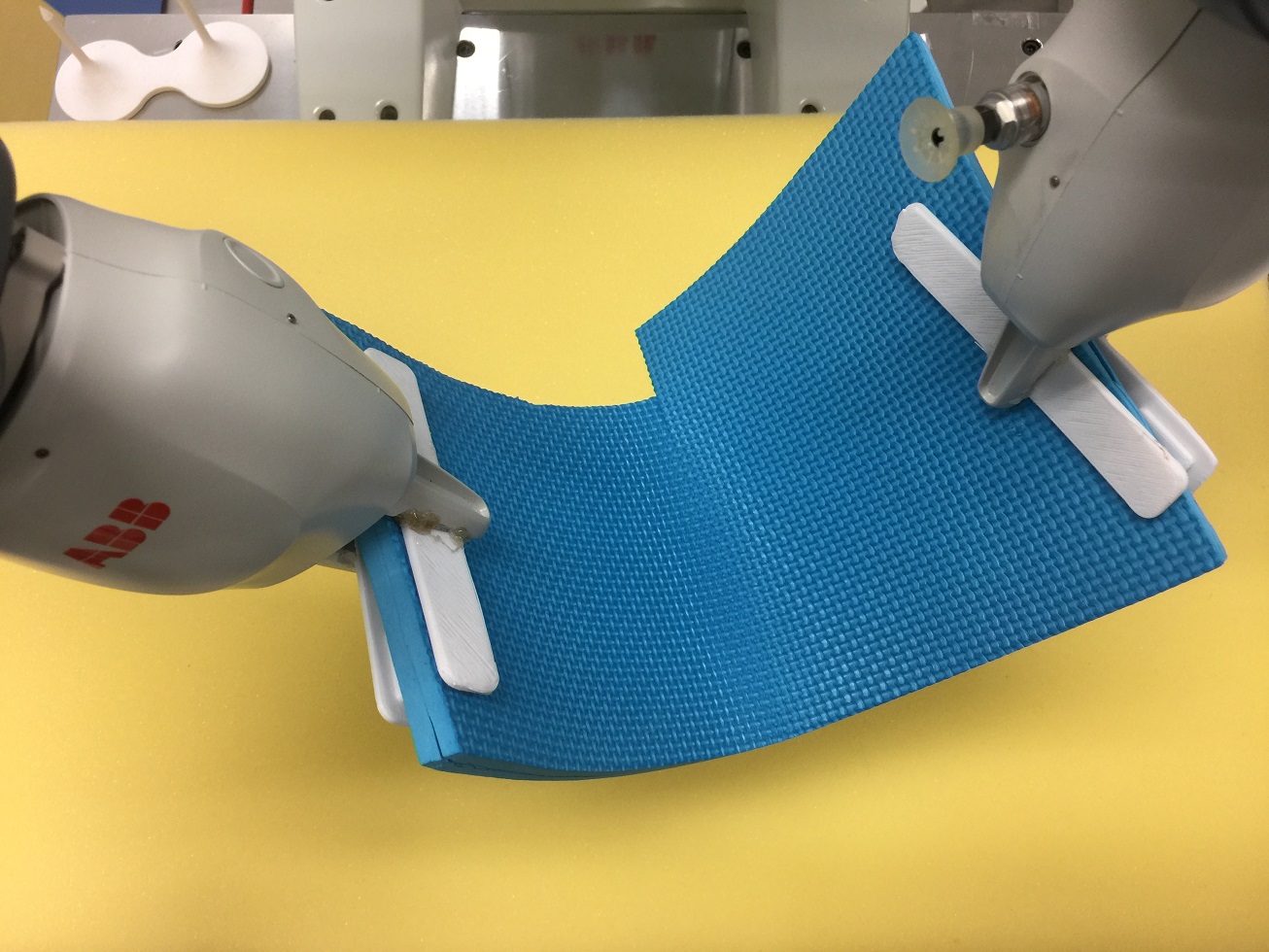}
\caption{}
\label{fig:plastic_plate}
\end{subfigure}
\begin{subfigure}{0.15\textwidth}
\includegraphics[width=1.0\linewidth]{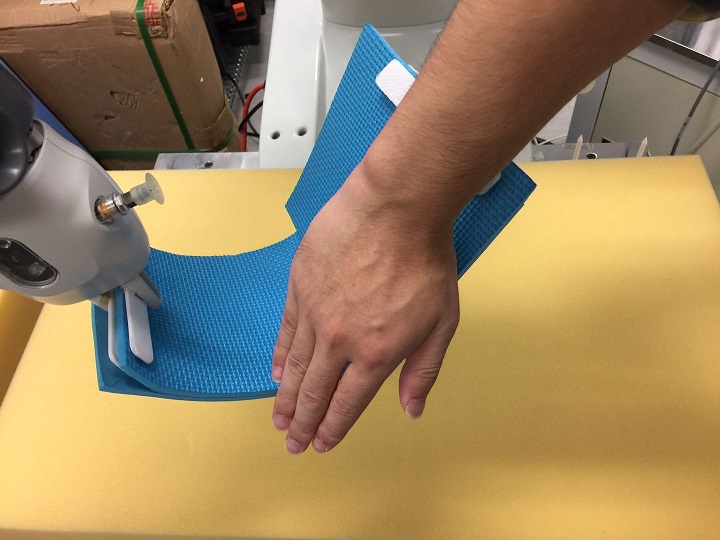}
\caption{}
\label{fig:plastic_plate_occlussion}
\end{subfigure}
\begin{subfigure}{0.15\textwidth}
\includegraphics[width=1.0\linewidth]{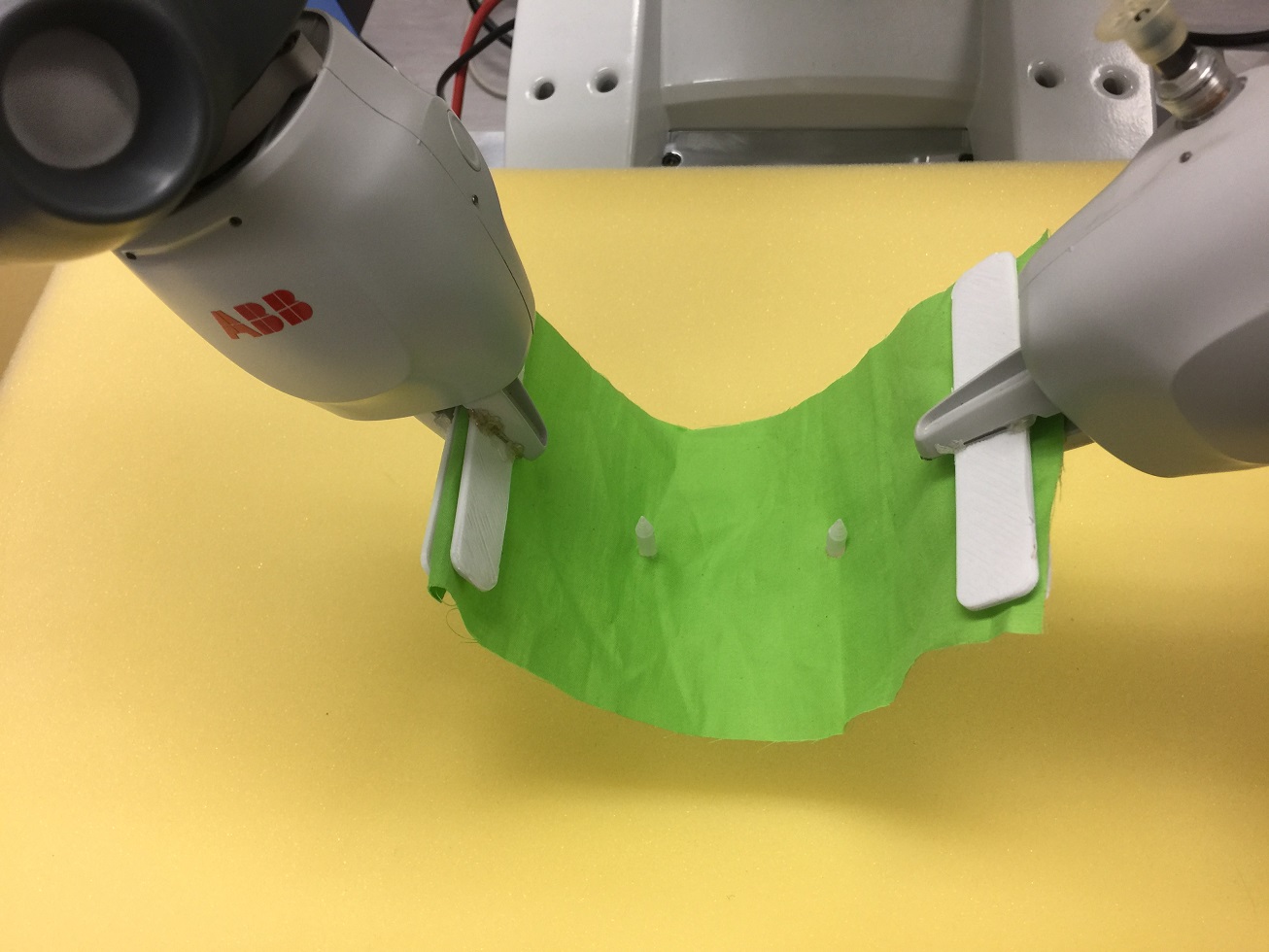}
\caption{}
\label{fig:stiff_fabric}
\end{subfigure}
\begin{subfigure}{0.15\textwidth}
\includegraphics[width=1.0\linewidth]{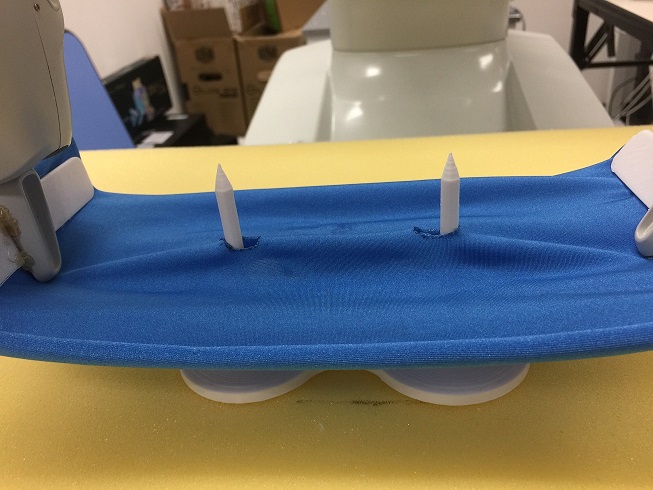}
\caption{}
\label{fig:elastic_fabric}
\end{subfigure}
\begin{subfigure}{0.15\textwidth}
\includegraphics[width=1.0\linewidth]{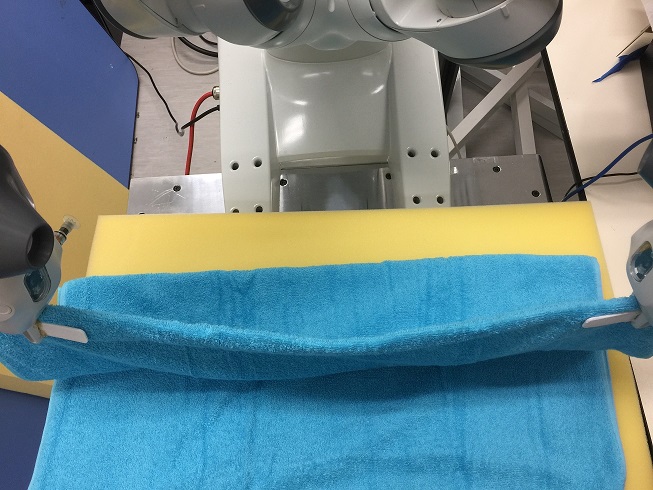}
\caption{}
\label{fig:bath_towel}
\end{subfigure}
\caption{The set of tasks used to evaluate the performance of the GPR-based nonlinear DOM controller: (a) rolled towel bending, (b) plastic sheet bending, (c) plastic sheet bending with occlusion, (d) peg-in-hole for unstretchable fabric, (e) peg-in-hole for stretchable fabric, and (f) towel folding. The first row shows the initial state of each object before the manipulation and the second row shows the goal states of the object after the successful manipulation.}
\label{fig:multi_tasks}
\end{figure*}

To evaluate the performance of the GPR-based nonlinear DOM controller, we design different manipulation tasks involving distinct objects as shown in Figure~\ref{fig:multi_tasks}. In these tasks, the feedback points can either be all object points observed by the camera or several marked points (similar to what is done in~\cite{Alarcon:2016:ADM}) critical for accomplishing a task.
\begin{itemize}
\item{\emph{Rolled towel bending}:} This task aims at bending a rolled towel in to a specific goal shape as shown in Figure~\ref{fig:rolled_towel}. We use a $4$-dimension feature vector $\mathbf s=\left[\mathbf c, d\right]$ as the feature vector in the FO-GPR driven visual-servo, where $\mathbf c$ is the centroid feature described by Equation~\ref{eq:centroid} and $d$ is the distance feature as described by Equation~\ref{eq:distance} for two feature points on the towel. 
\item{\emph{Plastic sheet bending}:}  The goal of this task is to manipulate a plastic sheet into a preassigned curved status as shown in Figure~\ref{fig:plastic_plate}. We use a $4$-dimension feature vector $\mathbf s=\left[\mathbf c, \sigma\right]$ to describe the state of the plastic sheet, where $\mathbf c$ is the centroid feature described by Equation~\ref{eq:centroid} and $\sigma$ is the surface variation feature computed by Equation~\ref{eq:variation}. In Figure~\ref{fig:plastic_plate_occlussion}, we also perform a task where the sheet is partially observed during the manipulation to demonstrate the robustness and stability of our proposed controller.
\item{\emph{Peg-in-hole for fabrics}:} This task aims at moving cloth pieces so that the pins can be inserted into the corresponding holes in the fabric. Two different types of fabric with different stiffnesses have been tested in our experiment: one is an unstretchable fabric, as shown in Figure~\ref{fig:stiff_fabric}, and the other is a stretchable fabric, as shown in Figure~\ref{fig:elastic_fabric}. The $6$-dimension feature vector $\mathbf s = \mathbf \rho$ is the position of feedback points as described in Equation~\ref{eq:position}. 
\item{\emph{Towel folding}:} This task aims at flattening and folding a towel into a desired state, as shown in Figure~\ref{fig:bath_towel}. We use a binned histogram of extended FPFH, which generates a feature vector of $135$ dimensions, to describe the towel's shape. Since the feature has a large dimension, for this experiment we need to manually move the robot in the beginning to explore sufficient data so that the FO-GPR can learn an adequate initial model for the complex deformation function. 
\end{itemize}

\begin{figure}[!htp!] 
\centering
\begin{subfigure}{0.48\linewidth}
\includegraphics[width=1.0\linewidth]{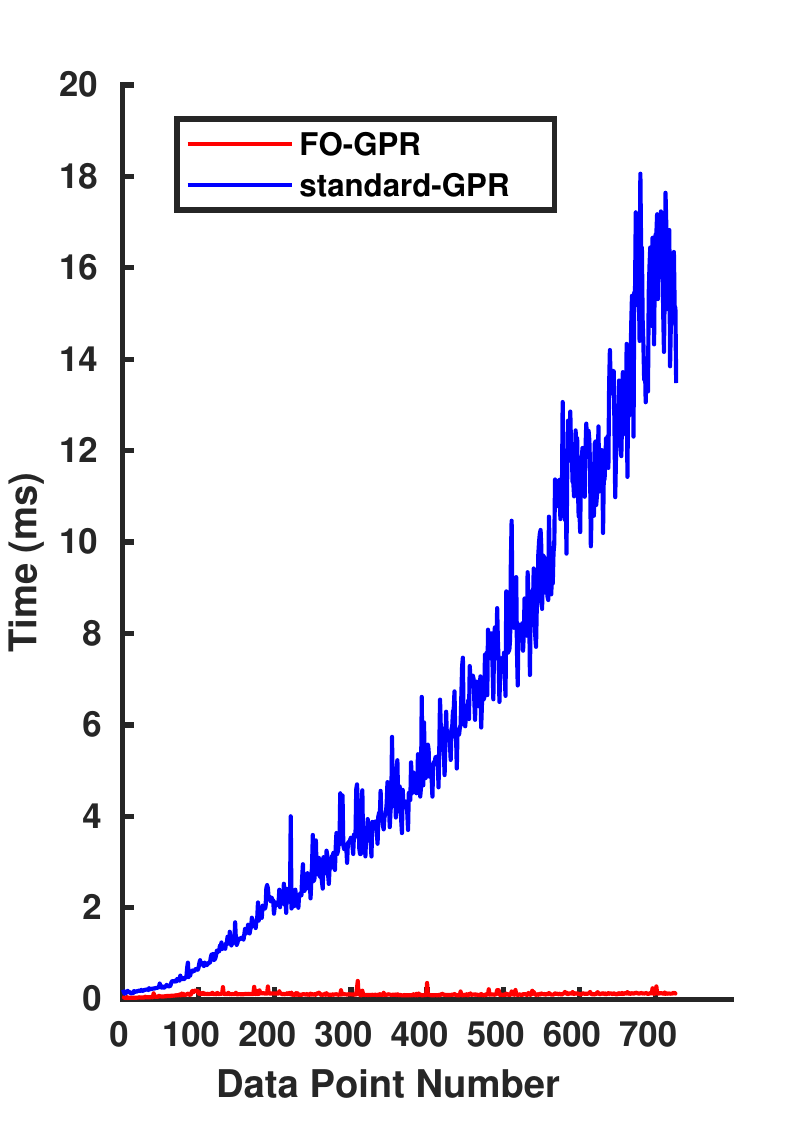}
\caption{}
\label{fig:idle-time}
\end{subfigure}
\begin{subfigure}{0.48\linewidth}
\includegraphics[width=1.0\linewidth]{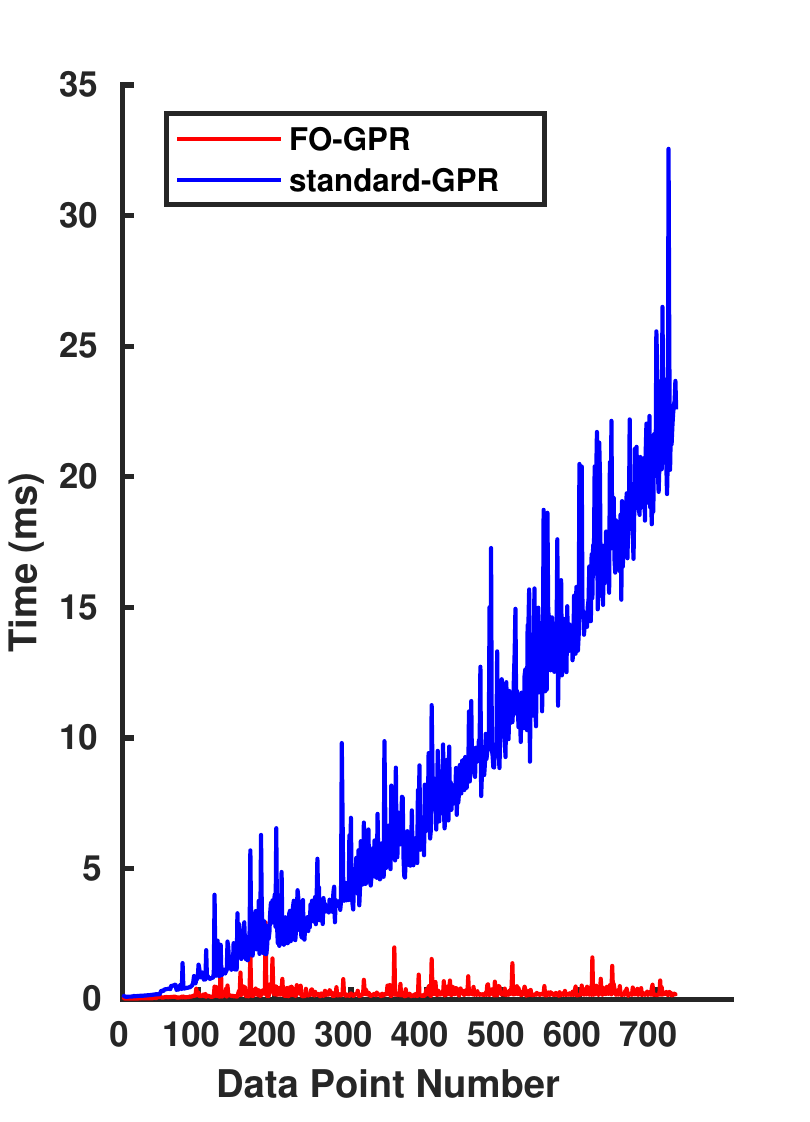}
\caption{}
\label{fig:busy-time}
\end{subfigure}
\caption{Comparison of the time cost of FO-GPR and standard GPR: (a) the time cost comparison between GP model estimation; (b) the time cost comparison for the entire manipulation process.}
\end{figure}

\begin{figure}[!htp!] 
\centering
\includegraphics[width=1\linewidth]{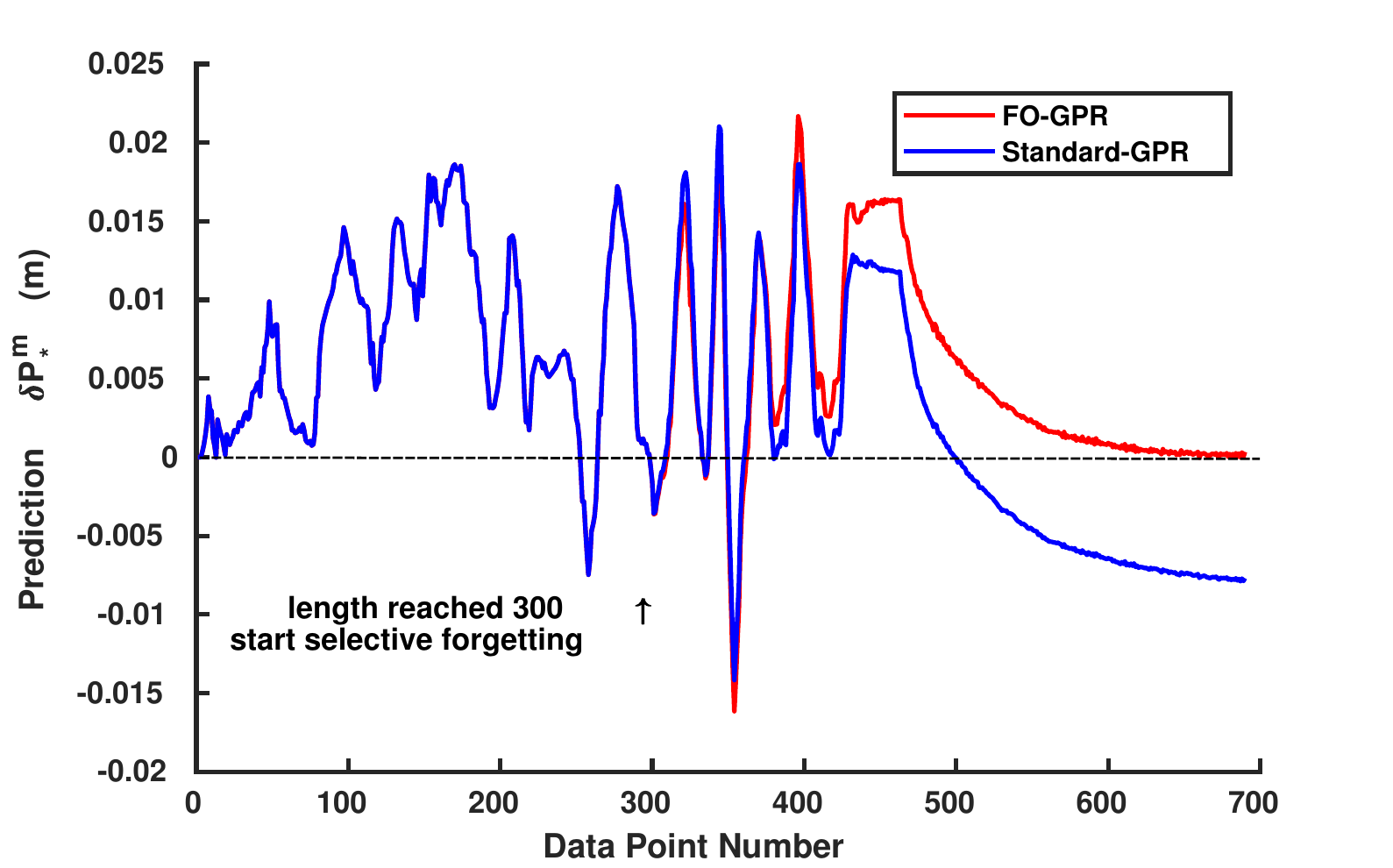}
\caption{Impact of selective forgetting in FO-GPR: FO-GPR is superior to standard GPR in terms of the computational cost and the deformation model accuracy.
}
\label{fig:information}
\end{figure}

\begin{figure}[!htp!] 
\centering
\includegraphics[width=1\linewidth]{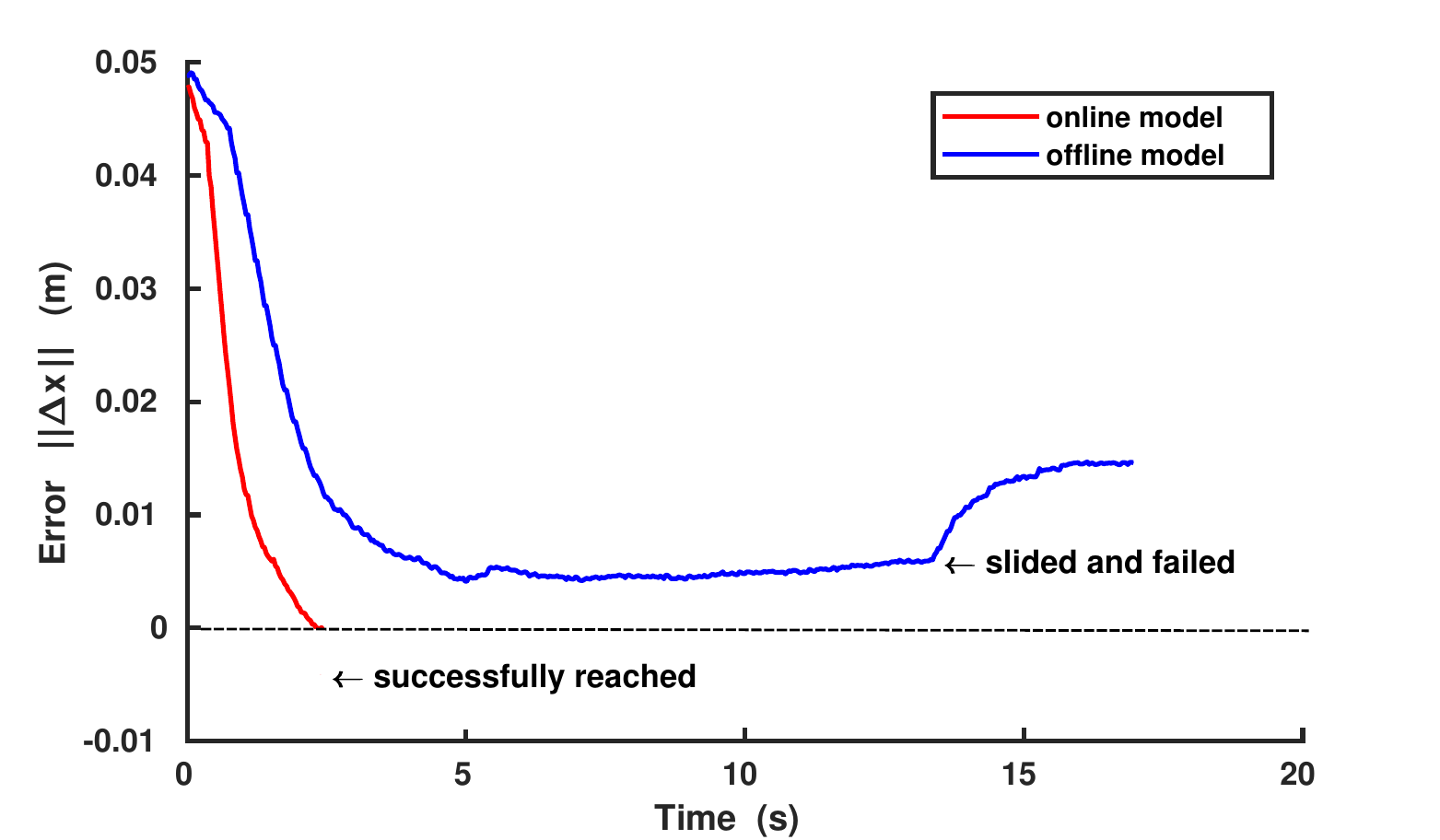}
\caption{Comparison of online and offline GP models for rolled towel manipulation. The controller based on the online model succeeds while the controller based on the offline model fails.}
\label{fig:online_offline}
\end{figure}

\begin{figure}[!htp!] 
\centering
\begin{subfigure}{0.5\textwidth}
\includegraphics[width=1.0\linewidth]{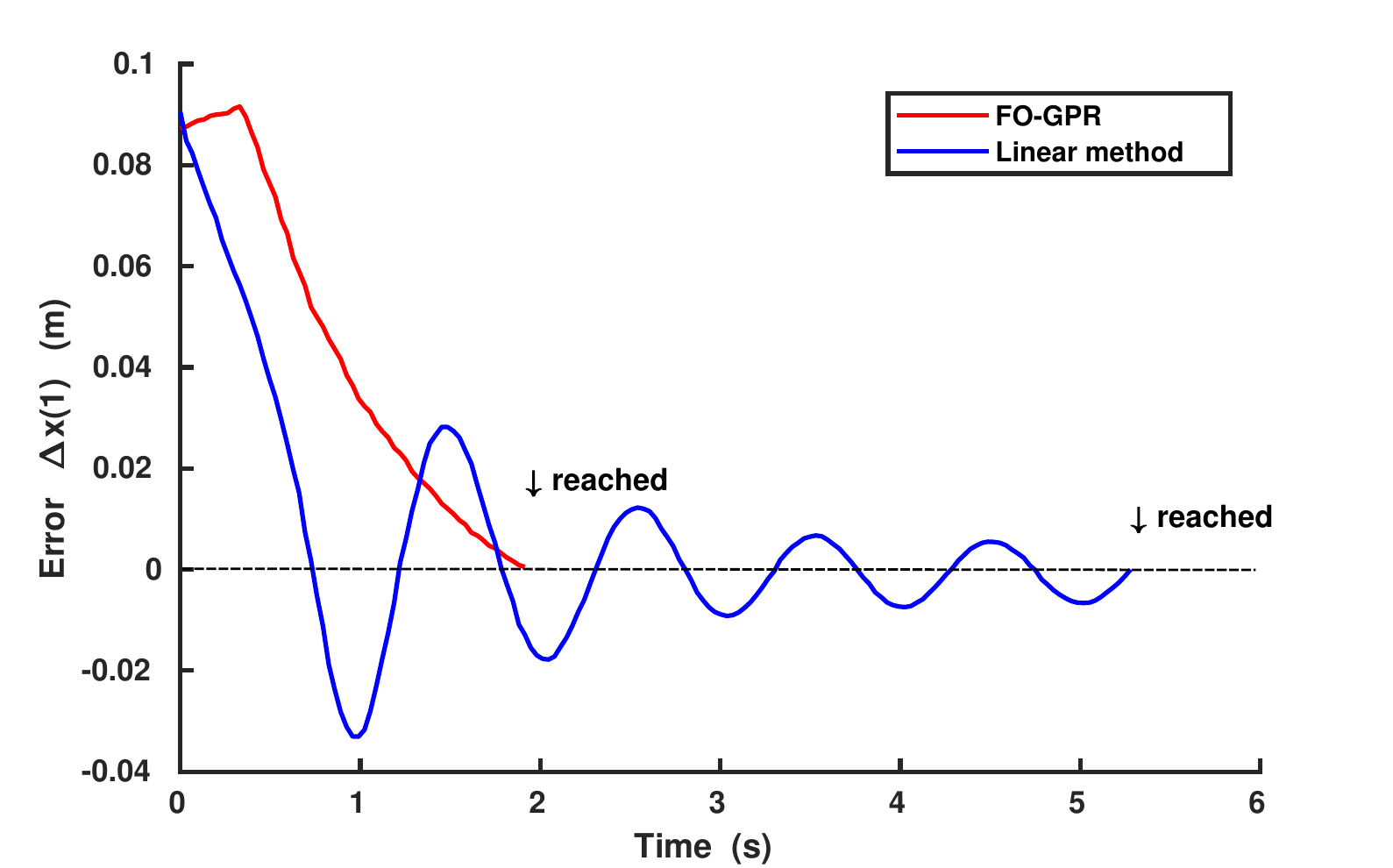}
\caption{}
\label{fig:FO_Linear4d1}
\end{subfigure}
\begin{subfigure}{0.5\textwidth}
\includegraphics[width=1.0\linewidth]{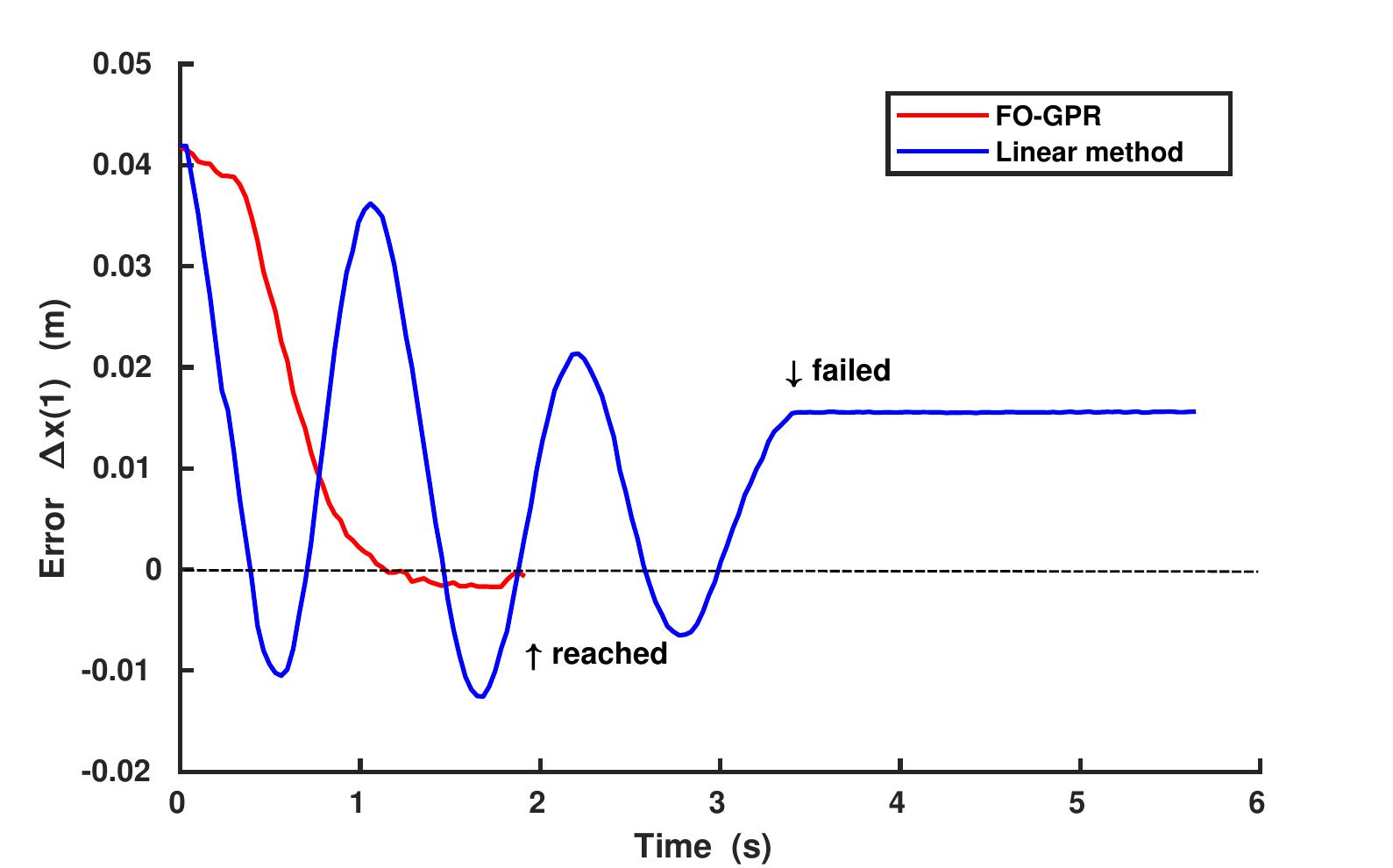}
\caption{}
\label{fig:FO_Linear6d}
\end{subfigure}
\caption{Comparison of the controllers based on FO-GPR and the linear model: (a) comparison on the rolled towel task; (b) comparison on the peg-in-hole task with stretchable fabric.}
\end{figure}

\begin{figure}[!htp!] 
\centering
\begin{subfigure}{0.5\textwidth}
\includegraphics[width=1.0\linewidth]{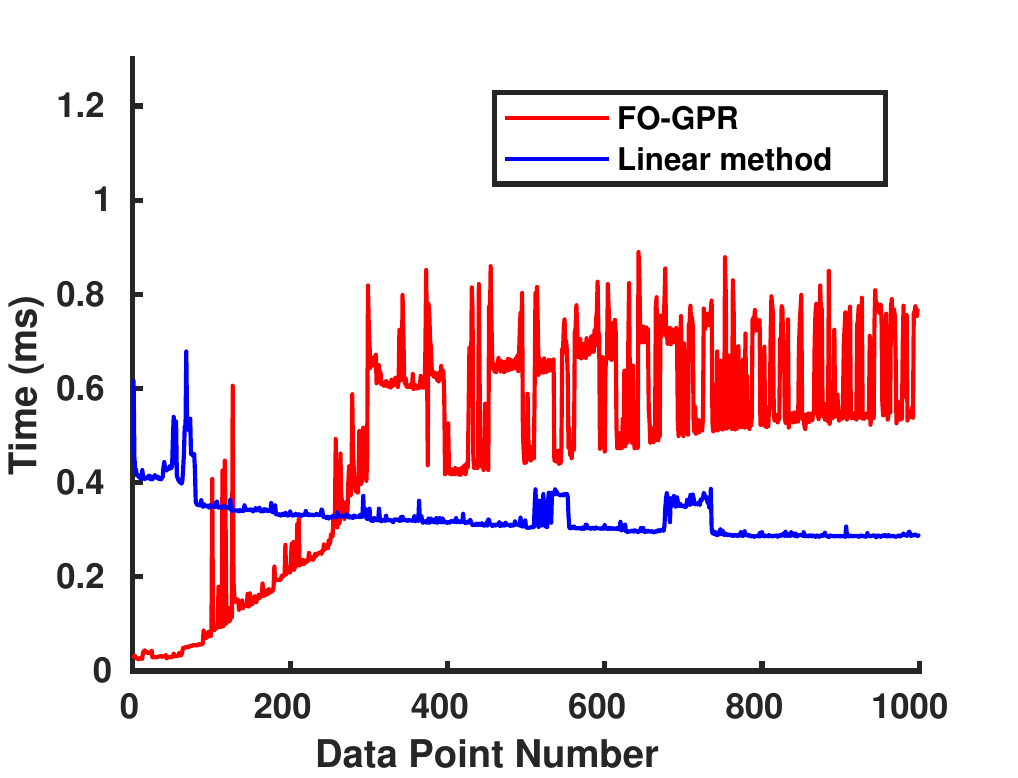}
\caption{}
\label{fig:time1_4}
\end{subfigure}
\begin{subfigure}{0.5\textwidth}
\includegraphics[width=1.0\linewidth]{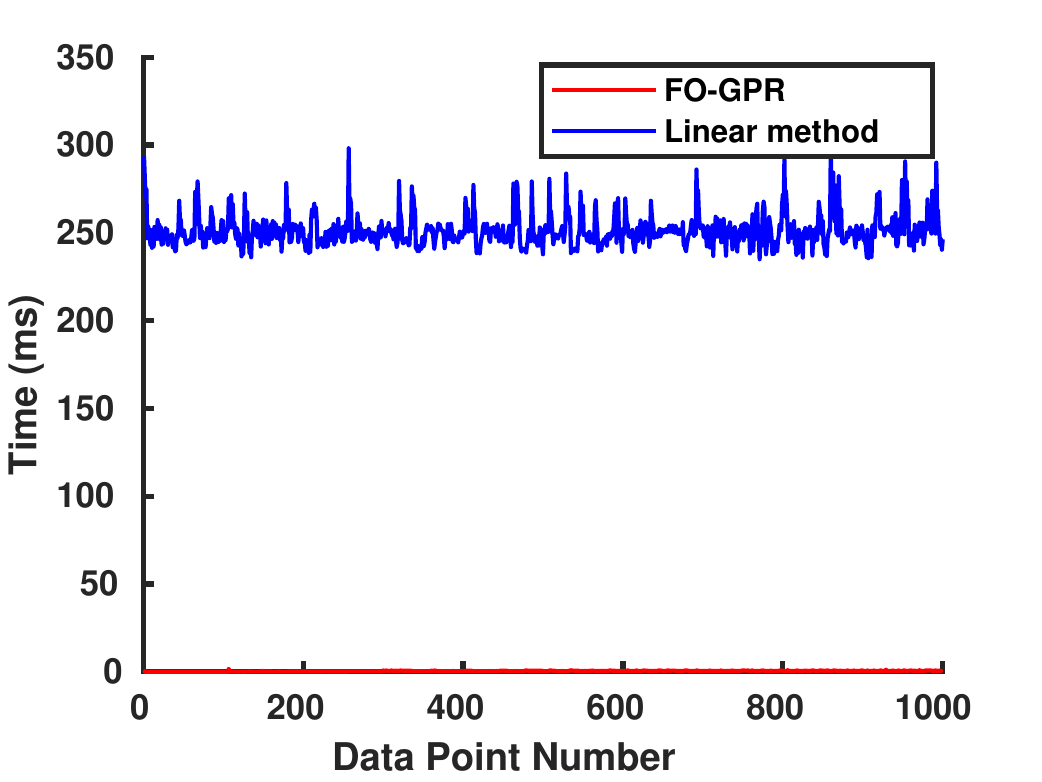}
\caption{}
\label{fig:time2_1000}
\end{subfigure}
\caption{Comparison of the time cost of FO-GPR and linear model on the rolled towel task (a) and the plastic sheet bending task (b).}
\label{fig:time-com-fo-j}
\end{figure}

Our FO-GPR based manipulation control can accomplish all the five tasks efficiently and accurately. The success rate is around 70\% for the peg-in-hole task and 90\% for the other tasks. 

Next, we provide some quantitative analysis of our approach by comparing it with some state-of-the-art approaches. 

\subsubsection{Comparison of computational cost with standard GPR}
As shown in Figure~\ref{fig:idle-time}, the time cost of the standard GPR operation in each iteration increases significantly when the number of training points increases, which makes the online deformation function update impossible. Our FO-GPR method's time cost is always under $2$ ms. We also compare the time cost of each complete cycle of the manipulation process, including feature extraction, tracking, robot control, and GPR, and the result is shown in Figure~\ref{fig:busy-time}. Again, the time cost of manipulation using standard GPR fluctuates significantly, and it can be 10 times slower than our FO-GPR based manipulation, whose time cost is always below $5$ ms and allows for real-time manipulation. This experiment is performed using the rolled towel bending task.

\subsubsection{Impact of selective forgetting}
In Figure~\ref{fig:information}, we compare the GP prediction quality between FO-GPR and the standard GPR on the rolled towel task, in order to show the impact of selective forgetting in FO-GPR. We record about $700$ data entries continuously. The first $450$ entries are produced using a random controller, and the rest are generated by the FO-GPR based controller, which drives the soft object toward the target state smoothly. 
The controllers using two GPR models provide the same velocity output before the data size reaches the maximum limit $M = 300$. From this point on, FO-GPR selectively forgets uninformative data while the standard GPR still uses all the data for prediction. For data points with indices between $300$ and $450$, the outputs from two controllers are similar, which implies that FO-GPR still provides a sufficiently accurate model. After that, the FO-GPR based controller drives the object toward the goal and the controller output eventually becomes zero. However, for standard GPR, the controller output remains not zero. This experiment suggests that the performance of FO-GPR is much better than GPR in real applications in terms of both time saving and the accuracy of the learned deformation model. 

\subsubsection{Comparison of online and offline GPR} In this experiment, we leave the Gram matrix unchanged after a while in the rolled towel manipulation task and compare the performance of the resulting offline model with that of our online learning approach. As shown in Figure~\ref{fig:online_offline}, the error in the feature space $\|\Delta \mathbf s\|_2$ decreases at the beginning of manipulation while using both models for controls. However, when the soft object is close to its target configuration, the controller using the offline model cannot output accurate prediction due to the lack of data around the unexplored target state. Thanks to the balance of exploration and exploitation of online FO-GPR, our method updates the deformation model all the time and thus can output a relatively accurate prediction so that the manipulation process is successful.

\subsubsection{Comparison of FO-GPR and adaptive Jacobian-based method}
We compare our approach to the state-of-the-art adaptive Jacobian-based method for soft object manipulation~\cite{Alarcon:2016:ADM}, which learns the Jacobian matrix using a squared error function and stochastic gradient descent (SGD). This method can be considered as a special case of our approach, where the deformation function is restricted to a linear function. First, through experimentation, we find that the learning rate of the linear model has a great impact on the manipulation performance and needs to be tuned offline for different tasks. Our approach can use the same set of parameters for all tasks. Next, we perform both methods on the rolled towel and the peg-in-hole with stretchable fabric tasks, and the results are shown in Figures~\ref{fig:FO_Linear4d1} and~\ref{fig:FO_Linear6d}, respectively. 
To visualize the comparison results, we choose one dimension from the feature vector $\mathbf s$ and plot it. In Figure~\ref{fig:FO_Linear4d1}, we observe that the error of the controller based on the linear model decreases quickly, but due to the error in other dimensions the controller still outputs a high control velocity and thus vibrations begin. The controller needs a long time to accomplish the task. In contrast, the error of the plotted dimension decreases slower but the controller finishes the task faster because the error of all dimensions declines to zero quickly, thanks to the nonlinear modeling capability of GPR. In the peg-in-hole task, we can observe that the GPR-based controller successfully accomplishes the task while the controller based on the linear model fails. We also compare the computational time of both methods in Figure~\ref{fig:time-com-fo-j}. For the rolled towel task, the adaptive-Jacobian based method is slightly (about 0.4 ms) faster than our approach since there are only 2 feedback points per frame. For the plastic sheet bending task with around 1200 feedback points per frame, the adaptive-Jacobian-based method is significantly slower due to the expensive matrix multiplication while computing the gradient.

\subsubsection{Robustness to occlusion}
Our method is robust to moderate levels of occlusions, as shown in Figure~\ref{fig:plastic_plate_occlussion} and~\ref{fig:bath_towel}. Thanks to the online learning mechanism, our controller can output a correct prediction for an occluded point once it has already been observed before and thus has been learned by the GPR model. Nevertheless, our method may not provide correct prediction and the controller may fail if the occluded part changes significantly and the features change rapidly.

\subsection{Random forest-based DOM controller}
\label{sec:experiment:forest}

Here we demonstrate the performance of the random forest-based DOM controller on both a simulated environment and real robot hardware. 
For the simulated environment, the robot's kinematics are simulated using Gazebo~\cite{Koenig:2004:DUP} and the cloth dynamics are simulated using ArcSim~\cite{Narain:2012:AAR}, a high-accuracy cloth simulator. We use OpenGL to capture RGB-D in this simulated environment.
For the real robotic environment, we use the same setup as before i.e., a RealSense depth camera to capture 640x480 RGB-D images and a 14-DOF ABB YuMi dual-armed manipulator to perform actions. Here the deformable object to be manipulated is restricted to a $35$cm$\times$$30$cm rectangular-shaped piece of cloth, as shown in Figure~\ref{fig:forest:task}.

\begin{figure}[ht]
  \centering
  \includegraphics[width=0.4\textwidth]{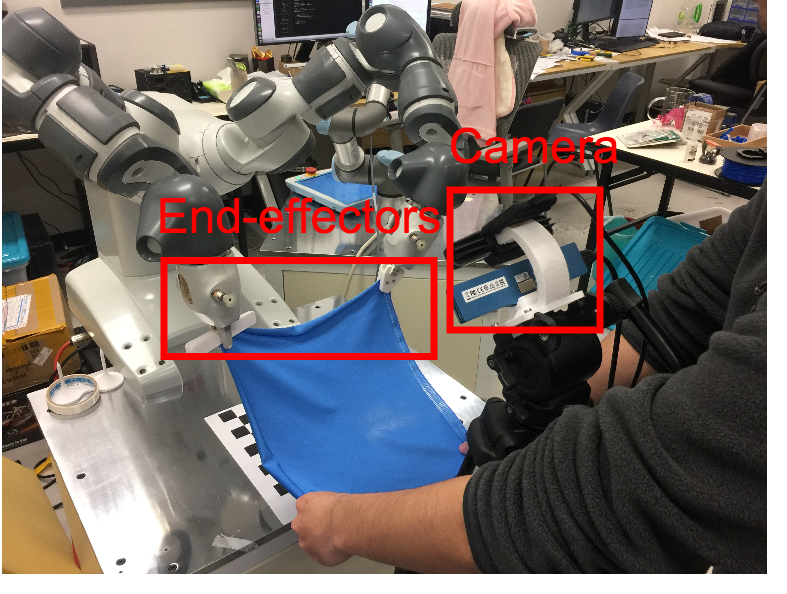}
  \caption{
    Setup for random forest-based DOM controller: A dual-armed robot and a human are holding four corners of the cloth. We use a $14$-DOF dual-armed ABB YuMi and a RealSense RGB-D camera to perform complex manipulation tasks. Our goal is to manipulate a $35$cm$\times$$30$cm rectangular-shaped piece of cloth. }
  \label{fig:forest:task}
\end{figure}

Our goal is to manipulate a rectangular piece of cloth with four corners initially located at: $v^0=(0,0,0), v^1=(0.3,0,0), v^2=(0,0.35,0), v^3=(0.3,0.35,0)$(m). 
Our manipulator holds the first two corners, $v^0,v^1$, of the cloth and the environmental uncertainty is modeled by having a human holding the last two corners, $v^2,v^3$, of the cloth, so that we have $\mathbf r \triangleq\TWOC{v^0}{v^1}$ and each control action is $6$-dimensional. The human could move $v^2,v^3$ to an arbitrary location under the following constraints:
\begin{align}
&0\text{m}<\|v^2-v^3\|\leq 0.3\text{m}	\\
&\|\TWOC{v^2}{v^3}_{i+1}-\TWOC{v^2}{v^3}_i\|_\infty<0.1\text{(m/s)},
\end{align}
where the first constraint avoids tearing the cloth apart and the second constraint ensures that the speed of the human hand is slow. 

\subsubsection{Synthetic benchmarks}
\label{sec:sim}
\begin{figure}[ht]
\includegraphics[width=0.5\textwidth]{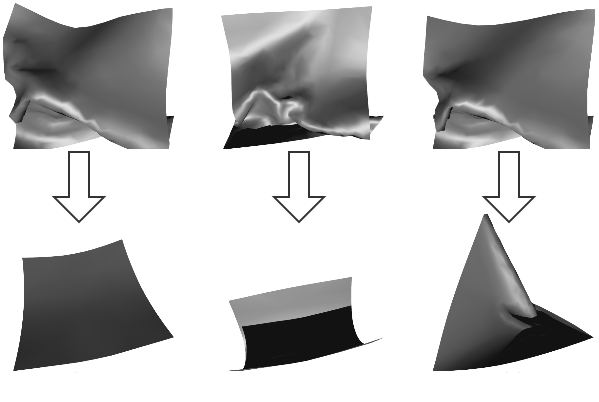}
\put(-225,3){(a)}
\put(-130,3){(b)}
\put(-40,3){(c)}
  \caption{
   Synthetic benchmarks: We highlight the real-time performance of the random-forest based controller on three tasks for the robot in simulator (a) keep the cloth straight, (b) keep the cloth bent, (c) keep the cloth twisted.}
  \label{fig:forest:syntask}
\end{figure}

To evaluate the robustness of our method, we design three manipulation tasks listed below:
\begin{itemize}
\item Cloth should remain straight in the direction orthogonal to the human hands. This is illustrated in Figure~\ref{fig:forest:syntask} (a). Given $v^2,v^3$, the robot's end-effector should move to:
\begin{align}
v^0&=v^2+0.35\frac{z\times(v^3-v^2)}{\|z\times(v^3-v^2)\|},\\
v^1&=v^3+0.35\frac{z\times(v^3-v^2)}{\|z\times(v^3-v^2)\|}.
\end{align}
\item Cloth should remain bent in the direction orthogonal to the human hands. This is illustrated in Figure~\ref{fig:forest:syntask} (b). Given $v^2,v^3$, the robot's end-effector should move to:
\begin{align}
v^0&=v^2+0.175\frac{z\times(v^3-v^2)}{\|z\times(v^3-v^2)\|},\\
v^1&=v^3+0.175\frac{z\times(v^3-v^2)}{\|z\times(v^3-v^2)\|}.
\end{align}
\item Cloth should remain twisted along the direction orthogonal to the human hands. This is illustrated in Figure~\ref{fig:forest:syntask} (c). Given $v^2,v^3$, the robot's end-effector should move to:
\begin{align}
v^0&=v^3+0.175\frac{z\times(v^3-v^2)}{\|z\times(v^3-v^2)\|},\\
v^1&=v^2+0.175\frac{z\times(v^3-v^2)}{\|z\times(v^3-v^2)\|}.
\end{align}
\end{itemize}
The above formula for determining $v^0,v^1$ is used to simulate an expert. These three equations assume that the expert knows the location of the human hands, but that the robot does not have this information and it must infer this latent information from a single-view RGB-D image of the current cloth configuration.

\subsubsection{Transferring from a simulated environment to real robot hardware}
\label{sec:real}
To transfer the controller parametrization trained on the synthetic data, we complete the following procedures.
First, we use the camera calibration techniques to get both the extrinsic and intrinsic matrix of the RealSense camera. 
Second, we compute the camera position, camera orientation, and the clipping range of the simulator from the extracted parameters.
Third, we generate a synthetic depth map using these parameters and train three tasks described in Section~\ref{sec:sim} using the random forest-based controller parametrization along with the IL algorithm.
After that, we integrate the resulting controller with the ABB YuMi dual-armed robot and the RealSense camera via the ROS platform.
As shown in Figure~\ref{fig:forest:real}, with these identified parameters, we can successfully perform the same tasks as the synthetic benchmarks on the real robot platform.

\begin{figure}[ht]
  \includegraphics[width=0.5\textwidth]{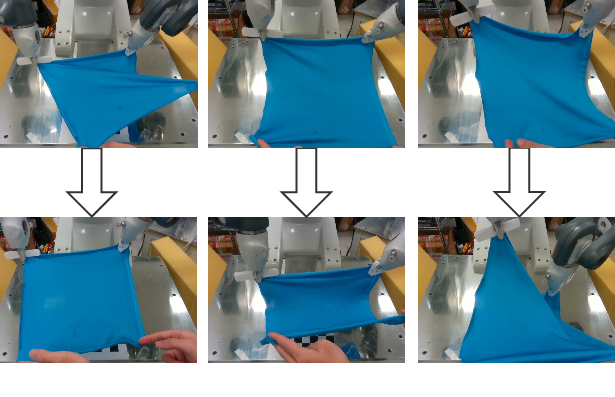}
\put(-220,3){(a)}
\put(-133,3){(b)}
\put(-45,3){(c)}
  \caption{ Real robot manipulation benchmarks: We highlight the real-time performance of random forest-based DOM controller on three tasks for robot and human collaboration (a) keep the cloth straight, (b) keep the cloth bent, (c) keep the cloth twisted.}
  \label{fig:forest:real}
\end{figure}

\subsubsection{Performance}

To evaluate the performance of each component in the random forest-based DOM controller, we run several variants of Algorithm~\ref{Alg:mainAlg}. All the meta-parameters used for training are illustrated in Table~\ref{table:param}. In our first set of experiments, we train a single-task random forest-based controller for each task. We profile the mean action error:
\begin{equation}
\label{eq:err}
\text{err}=\sum_{\langle \mathcal{O}(c),\mathbf r^*\rangle}\frac{1}{|\mathbf r^*||\mathcal{D}|}
\|\mathbf r^*-\frac{1}{K} \mathbf r_{l_k(\mathcal{O}(c)),k}^*\|^2,
\end{equation}
with respect to the number of IL iterations (Line~\ref{ln:outer} of Algorithm~\ref{Alg:mainAlg}). 

As illustrated in Figure~\ref{fig:forest:perf} (red), the action error reduces quickly within the first few iterations and converges later. We also plot the number of leaf-nodes in our random forest in Figure~\ref{fig:forest:perf} (green). As our controller converges, our random forest's topology also converges. We have also tried to infer the construction of the random forest (Line~\ref{ln:rs} of Algorithm~\ref{Alg:mainAlg}) from the IL algorithm, i.e. we construct a random forest during the first iteration and then keep it fixed, only learning the optimal control actions using IL. As illustrated in Figure~\ref{fig:forest:perf} (blue),
the residual in the precluded version is larger than that in the original version.
These experiments show that IL is suitable for training our random forest-based controller and allowing the random forest construction to be integrated into the IL algorithm does help improve controller performance. 
We also highlight the robustness of our method by comparing the performance of our controller with the basic controller using HOW features (Section~\ref{sec:featuredesign2}) in Figure~\ref{fig:forest:robust}.
This is done by applying both controllers to a scenario with randomized human actions.

\begin{figure}[ht]
  \centering
  \includegraphics[width=0.45\textwidth]{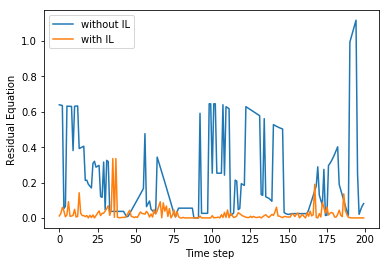}
  \caption{
    Robustness of the imitation learning algorithm:
  In a real-time human robot interaction, we plot the mean action error (Equation~\ref{eq:err}). The blue curve shows the performance of the controller trained using only one IL iteration and the orange curve shows the performance of the controller with 20 IL iterations. We compare residuals (Equation~\ref{eq:err}) between the two methods.
 The figure shows the robustness of the our method over the basic DOM controller using HOW features. }
  \label{fig:forest:robust}
\end{figure}

\begin{figure*}[t]
  \centering
  \includegraphics[width=0.32\textwidth]{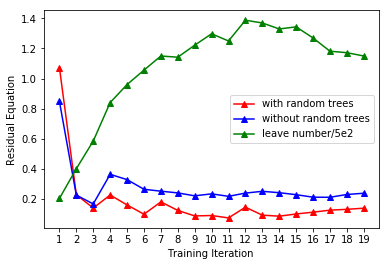}
  \includegraphics[width=0.32\textwidth]{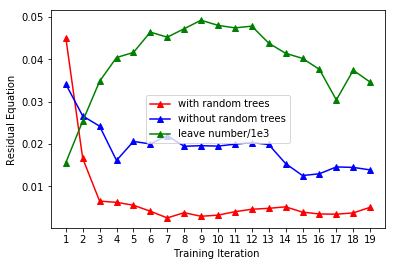}
  \includegraphics[width=0.32\textwidth]{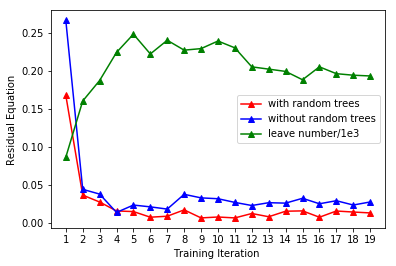}
  \put(-420,-8){(a)}
  \put(-240,-8){(b)}
  \put(-80,-8){(c)}
  \caption{
  Controller with random forest v.s. controller without random forest: (red): Residual (Equation~\ref{eq:err}) plotted against the IL iterations (Line~\ref{ln:outer} of Algorithm~\ref{Alg:mainAlg}). (green): Number of leaf-nodes plotted against the IL iterations. (blue): Residual Equation~\ref{eq:err} plotted against the IL iterations, which precludes random forest construction. (a): cloth straight; (b): cloth bent; (c): cloth twisted. As our controller converges, our random
forest’s topology also converges. Our random-forest-based controller outperforms the previous method. }
  \label{fig:forest:perf}
\end{figure*}

\begin{table}
\centering
\setlength{\tabcolsep}{10pt}
\begin{tabular}{ll}
Name & Value \\
\hline \\
Fraction term used in IL algorithm~\cite{Ross:2011:RIL} & $0.8$	\\
Training data collected in each IL iteration & $500$	\\
Resolution of RGB-D image & $640\times480$	\\
Dimension of HOW-feature used in Section~\ref{sec:featuredesign2} & $768$	\\
\TWORCell{Random forest's stopping criterion when}{impurity decrease less than \cite{Shotton:2011:RHP}} & $1 \times 10^{-4}$	\\
L1 regularization weights, $\theta$, in Equation~\ref{eq:opt_sparse} & $1 \times 10^{-5}$ \\
\hline \\
\end{tabular}
\caption{\label{table:param} Meta-parameters used for training.}
\end{table}

In our second experiment, we test the performance of 
the single-task controller, i.e., the controller that is trained for a single manipulation task such as bending. 
In our simulated environment, this can be performed by running our simulator.
During each simulation, we move the hands to $10$ random target positions $v^{2*},v^{3*}$.
As shown in Figure~\ref{fig:forest:multi}(red), we profile the residual (Equation~\ref{eq:err}) in all $10$ problems. Our controller performs consistently well with a relative action error of $0.4954$\%.

In the third experiment, we train a joint 3-task controller for all tasks. This is done by defining a single random forest and $3$ optimal actions on each leaf-node. The performance of the 3-task controller is compared with the single-task controller in Figure~\ref{fig:forest:multi}. The multi-task controller performs slightly worse in each task, but the difference is quite small. 
\begin{figure*}[t]
  \centering
  \includegraphics[width=0.32\textwidth]{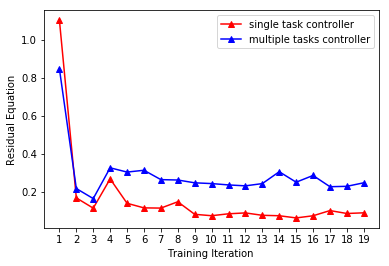}
  \includegraphics[width=0.32\textwidth]{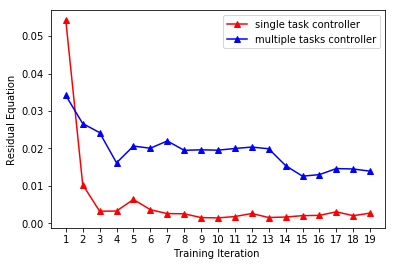}
  \includegraphics[width=0.32\textwidth]{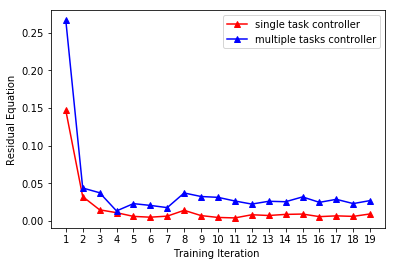}
  \put(-420,-8){(a)}
  \put(-240,-8){(b)}
  \put(-80,-8){(c)}
  \caption{
  Multi-task controller vs. single-task controller:
  Residual (Equation~\ref{eq:err}) in 10 random problems using joint 3-task controller (blue) and a single-task controller (red). (a) cloth straight; (b) cloth bent; (c) cloth twisted. Both controllers converge after a few iterations of the IL algorithm. The single-task controller performs consistently well with a relative action error of $0.4954$\%. The multi-task controller performs slightly worse in each task, but the difference is relatively small.}
  \label{fig:forest:multi}
\end{figure*}
\subsection{Benefits over prior approaches}
A key feature of our method is that it allows the robot to react to random human movements, while the effect of these movements is indirectly reflected via a piece of cloth. This setting is very similar to~\cite{Hirai:2000:ISP}. However, \cite{Hirai:2000:ISP} assumes the 3D cloth mesh $c$ is known without sensing error, which is not practical. 

Our method falls into a broader category of the visual-servoing methods, but most previous work in this area such as~\cite{Lee:2017:LVS} focused on navigation tasks and there is relatively little work on deformable body manipulation. \cite{bateux:2017:histograms} based their servoing engine on histogram features which is a similar method to our HOW-feature. However, they use direct optimization to minimize the cost function ($\mathbf{dist}(\mathcal{O}(c),\mathcal{O}(c^*))$) which is not possible in our case because our cost function is non-smooth in general.

Finally, our method is closely related to~\cite{Doumanoglou:2014:ARF,Doumanoglou:2014:AAR}, which also use random forest and store actions on the forest. However, our method is different in two ways. First, our controller is continuous in its parameters, which means it can be trained using an IL algorithm. We put both feature extraction and controller parametrization into the IL algorithm~\cite{Ross:2011:RIL} so that both the feature extractor and the controller benefit from evolving training data.

\section{Conclusion and future work}
\label{sec:conclusion}
We have presented a general approach framework to automatically servo-control soft objects using a dual-arm robot, and provide new solutions in both the feature design and the controller optimization. For feature design, we present a new algorithm to compute HOW-features, which capture the shape variation and local features of the deformable material using limited computational resources. The visual feedback dictionary is precomputed using sampling and clustering techniques and used with sparse representation to compute the velocity of the controller to perform the task. For controller design, we first propose an online GPR model to estimate the deformation function of the manipulated objects, and used low-dimension features to describe the object's configuration. The resulting GPR-based visual servoing system can generate high quality control velocities for the robotic end-effectors and is able to accomplish a set of manipulation tasks robustly. We also present a DOM controller where the optimal control action is defined on the leaf-nodes of a random forest. Further, both the random forest construction and controller optimization are integrated with an inmitaion learning algorithm and evolved with training data. We evaluate our proposed approaches on a set of DOM benchmark tasks. The result shows that our method can seamlessly combine feature extraction and controller parametrization problems. In addition, our method is robust to random noises in human motion and can accomplish high success rate.

Our approaches have some limitations and there are many avenues for future work.

For the HOW feature design, since HOW-features are computed from 2D images, the accuracy of the computations can also vary based on the illumination and relative colors of the cloth. For future work, we would like to make our approach robust to the training data and the variation of the environment.

For the GPR-based DOM controller, the deformation function actually depends not only on the feature velocity but also on the current state of the object in some complicated manipulation tasks. i.e., the input of the deformation function should add a variable to represent current state, like relative positions of feedback or manipulated points. However that function is relatively difficult to learn by online exploration since the model needs more data. For future work, we plan to find a better exploration method to learn a more complicate deformation function involving not only the feature velocity but also the current configuration of the object in the feature space, in order to achieve more challenging tasks like cloth folding.

For the random forest-based DOM controller, it is not easy to extend our method to reinforcement learning scenarios. This is because our method is not differentiable by involving a random forest construction. Therefore, reinforcement learning algorithms such as the policy gradient method~\cite{Peters:2006:PGM} cannot be used. Another potential drawback is that our method is still sensitive to the random forest's stopping criterion. When the stopping criterion is too loose, the number of leaf-nodes can increase considerably, leading to overfitting in Equation~\ref{eq:opt_sparse}. Finally, we need additional dimension reduction, i.e. the HOW-feature, and action labeling in the construction of the random forest. In our current version, labeling is done by mean-shift clustering of optimal actions, but in some applications, where observations can be semantically labeled, labeling observations instead of actions can be advantageous. For example, in object grasping tasks, we can construct our random forest to classify object types instead of classifying actions.

{\small
\bibliographystyle{IEEEtran}
\bibliography{references}
}

\end{document}